\def\eqref#1{equation~\ref{#1}}
\def\1{\bm{1}}
\DeclareMathAlphabet{\mathsfit}{\encodingdefault}{\sfdefault}{m}{sl}
\SetMathAlphabet{\mathsfit}{bold}{\encodingdefault}{\sfdefault}{bx}{n}
\newtheorem{theorem}{Theorem}
\newtheorem{lemma}[theorem]{Lemma}
\newcommand{\xx}{\boldsymbol{x}}
\newcommand{\zz}{\boldsymbol{z}}
\newcommand{\diff}{\mathrm{d}}
\newtheorem*{theorem*}{Theorem}
\newtheorem*{proposition*}{Proposition}
\definecolor{darkblue}{rgb}{0.0, 0.0, 0.55}
\definecolor{dark2blue}{rgb}{0.0, 0.0, 0.4}
\definecolor{darkred}{rgb}{0.55, 0.0, 0.0}
\definecolor{darkgreen}{rgb}{0, 0.5, 0.0}
\begin{document}

%

%

\twocolumn[

\aistatstitle{Denoising Fisher Training For Neural Implicit Samplers}

\aistatsauthor{ Weijian Luo \And Wei Deng}

\aistatsaddress{ Peking University\\ \texttt{luoweijian@stu.pku.edu.cn}\\ \texttt{pkulwj1994@icloud.com} \And  Machine Learning Research,\\ Morgan Stanley, New York\\ \texttt{weideng056@gmail.com}}]

\begin{abstract}
Efficient sampling from un-normalized target distributions is pivotal in scientific computing and machine learning. While neural samplers have demonstrated potential with a special emphasis on sampling efficiency, existing neural implicit samplers still have issues such as poor mode covering behavior, unstable training dynamics, and sub-optimal performances. To tackle these issues, in this paper, we introduce Denoising Fisher Training (DFT), a novel training approach for neural implicit samplers with theoretical guarantees. We frame the training problem as an objective of minimizing the Fisher divergence by deriving a tractable yet equivalent loss function, which marks a unique theoretical contribution to assessing the intractable Fisher divergences. DFT is empirically validated across diverse sampling benchmarks, including two-dimensional synthetic distribution, Bayesian logistic regression, and high-dimensional energy-based models (EBMs). Notably, in experiments with high-dimensional EBMs, our best one-step DFT neural sampler achieves results on par with MCMC methods with up to 200 sampling steps, leading to a substantially greater efficiency over 100 times higher. This result not only demonstrates the superior performance of DFT in handling complex high-dimensional sampling but also sheds light on efficient sampling methodologies across broader applications.
\end{abstract}

\section{INTRODUCTION}
Efficiently drawing samples from un-normalized distributions is a fundamental issue in various domains of research, including Bayesian inference \citep{revjump}, simulations in biology and physics \citep{mcprotein,mcphysics}, and the fields of generative modeling and machine learning \citep{Xie2016ATO,andrieu2003introduction}. The goal is to generate sets of samples from a target distribution defined by a differentiable, but possibly not normalized, potential function, denoted as $\log q(\xx)$. The main challenge lies in ensuring the accuracy of the target distribution with minimal sampling costs, especially in simulations involving multi-modal distributions. 

\begin{figure*}
\centering
\includegraphics[width=1.0\linewidth]{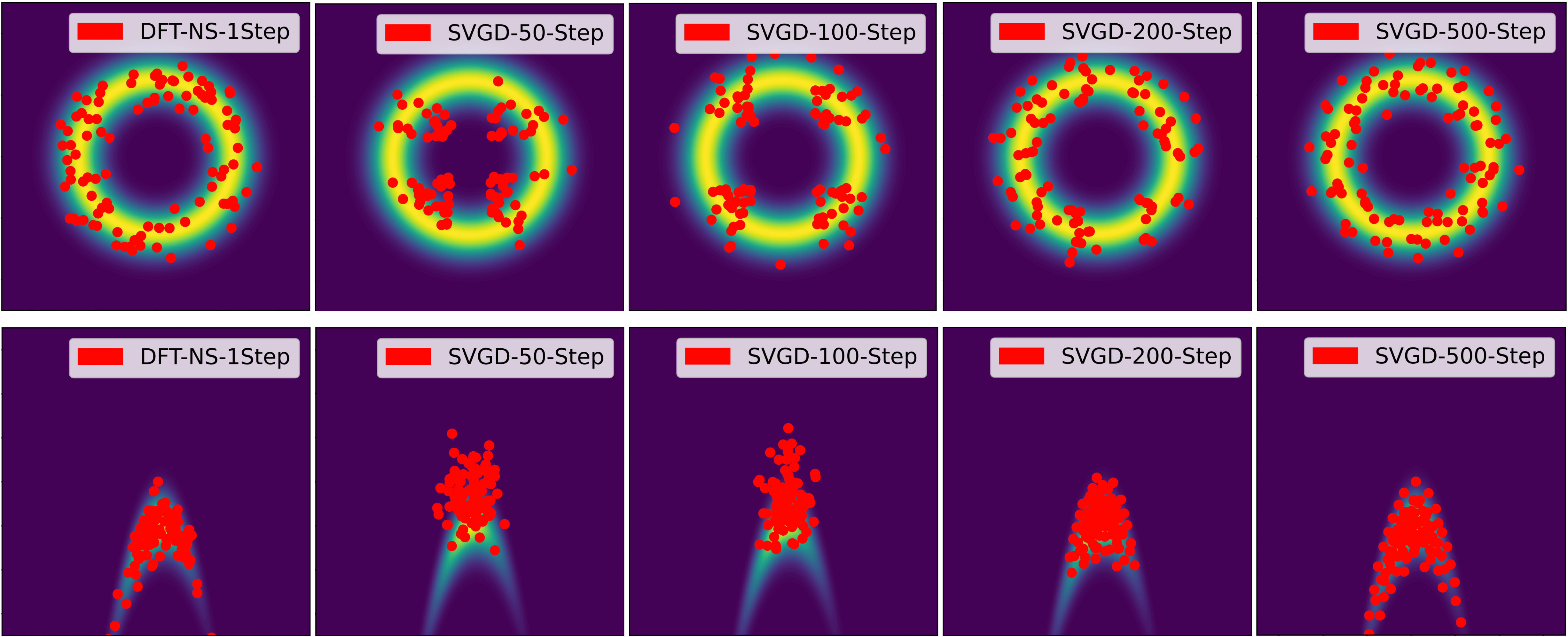}
\caption{Visualizations of DFT neural sampler against Stein Variational Gradient Decent \citep{liu2016stein} with multiple sampling steps. The DFT-NS with 1 sampling step outperforms SVGD with 200 sampling steps. \textbf{Upper}: Donut target distribution in Table \ref{tab:1}; \textbf{Under}: Rosenbrock target distribution in Table \ref{tab:1}.}
\label{fig:big_vis}
\end{figure*}

The sampling challenge can be tackled using two main methodologies. The first involves a range of Markov Chain Monte Carlo (MCMC) techniques \citep{mh, ld, mala, hmc}. These techniques create Markov Chains that are designed to have the desired target distribution as their stationary distribution. By simulating these chains, random noises can be iteratively adjusted to stationary distributions following the target distribution. While MCMC methods are renowned for generating asymptotically unbiased samples, they often face computational inefficiencies, particularly when dealing with barriers in high-dimensional multi-modal distributions \citep{huang2024reverse}. This is largely due to the large number of iterations necessary to produce a batch of samples that accurately represent the target distribution.

The second category of methods is known as learning to sample (L2S) models \citep{Lvy2017GeneralizingHM,nss,snf,Langosco2021NeuralVG,arbel2021annealed,Zhang2021PathIS,matthews2022continual,vargas2022denoising,lahlou2023theory}. These models leverage modern neural networks for sampling, aiming to improve both the quality and efficiency of sample generation compared to traditional non-learning-based methods such as MCMCs.

A prominent example within the L2S framework is the neural implicit sampler, which is recognized for its exceptional efficiency. An implicit neural sampler employs a transformation parametrized with a neural network to directly transform an easy-to-sample initial distribution to obtain final samples without multiple iterations. This can bypass the crucial challenge of sampling from multi-modal distributions via MCMCs and propose a direct transport from simple distributions to the target \citep{huang2024reverse}. 
As a result, the neural implicit sampler offers significant enhancements in efficiency and scalability using just one neural network pass. They are particularly beneficial in high-dimensional spaces, such as those encountered in image generations.

In this paper, we introduce Denoising Fisher Training (DFT), a novel approach for training neural implicit samplers to efficiently sample from un-normalized target distributions. We frame the training objective by minimizing the Fisher divergence between the implicit sampler and the target distribution. Though the direct objective is intractable, we introduce a tractable yet equivalent training objective in Section \ref{sec:objective} with theoretical guarantees. Our theoretical assessments also introduce new tools for handling intractable Fisher divergences in broader applications. To evaluate the performance, efficiency, and scalability of DFT and the corresponding neural samplers, we have carried out empirical evaluations across three different sampling benchmarks with varying degrees of complexity in Section \ref{sec:exp}. The benchmarks include:
\begin{itemize}
    \item Sampling from two-dimensional toy distributions;
    \item Bayesian inference with moderate dimensionality;
    \item Sampling from high-dimensional energy-based models (EBMs) using the MNIST dataset.
\end{itemize}
The experimental results consistently show that DFT samplers outperform existing methods in terms of sample quality. Notably, in the high-dimensional EBM tests, our neural samplers produced sample quality on par with the baseline EBM but with computational efficiency more than 200 times greater than traditional MCMC methods. These findings highlight the effectiveness, efficiency, and versatility of our proposed methods across a wide range of sampling scenarios.

\section{RELATED WORKS}
\subsection{Neural Samplers}\label{sec:gm_stein}
Neural networks have emerged as powerful tools in various domains, demonstrating their ability to produce diverse, high-quality samples. They have been successfully applied in tasks such as text-to-image generation \citep{brock2018large,karras2019style,karras2020analyzing,karras2021alias,nichol2021improved,dhariwal2021diffusion,ramesh2022hierarchical,saharia2022photorealistic,rombach2022high}, audio generation\citep{huang2023make}, video and 3D creation\citep{clark2019adversarial,ho2022imagen,molad2023dreamix,poole2022dreamfusion}, and even molecule design \citep{Nichol2021GLIDETP,Ho2022VideoDM}. Recently, there has been a growing interest in leveraging neural networks for sampling from target distributions. 

Three primary classes of neural networks have been extensively studied for sampling tasks. The first class comprises normalizing flows (NFs) \citep{rezende2015variational}, while the second class consists of diffusion models (DMs) \citep{song2020score}. NFs employ invertible neural transformations to map Gaussian latent vectors $\zz$ to obtain samples $\xx$. The strict invertibility of NF transformations enables the availability of likelihood values for generated samples, which are differentiable with respect to the model's parameters. Training NFs often involves minimizing the KL divergence between the NF and the target distribution \citep{snf}. On the other hand, DMs employ neural score networks to model the marginal score functions of a data-initialized diffusion process. 
DMs have been successfully employed to enhance the sampler performance of the annealed importance sampling algorithm \citep{ais01}, a widely recognized MCMC method for various sampling benchmarks. Despite the successes of NFs and DMs, both models have their limitations. The invertibility of NFs restricts their expressiveness, which can hinder their ability to effectively model high-dimensional targets. Moreover, DMs still require a considerable number of iterations for sample generation, resulting in computational inefficiency.

\subsection{Implicit Neural Samplers}
Different from other neural samplers, neural implicit samplers are favored for their high efficiency and flexible modeling ability. Typically, an implicit generative model uses a flexible neural transform $g_\theta(.)$ to push forward easy-to-sample noises $\zz\sim p_z$ to obtain samples $\xx=g_\theta(\zz)$. Though implicit samplers have multiple advantages, training them is not easy. 

Research has explored various techniques for efficiently training neural implicit samplers by focusing on minimizing divergence. For example, the study by \citet{nss} examines the training process through the reduction of the Stein discrepancy, while another study looks into practical algorithms designed to decrease the Kullback-Leibler divergence and the Fisher divergence. Although these approaches have shown promising results, they are not without their shortcomings. In particular, the methods intended to minimize Fisher divergence, despite their theoretical validity, have struggled to perform effectively with target distributions that exhibit high multi-modality\citep{luo2024entropy}. Besides, the training approach that minimizes the KL divergence also encounters sub-optimal performances \citep{luo2024entropy}. On the contrary, our introduced Denoising Fisher Training overcame the above issues by introducing a new stable and equivalent loss. Before we introduce the DFT, we give some background knowledge in Section \ref{sec:background}. 

\section{BACKGROUND}\label{sec:background}
\paragraph{Standard Score Matching.}
Score matching \citep{hyvarinen2005estimation} provided practical approaches to estimating score functions. Assume one only has available samples $\xx\sim p$ and wants to use a parametric approximated distribution $q_\phi(\xx)$ to approximate $p$. Such an approximation can be made by minimizing the Fisher Divergence between $p$ and $q_\phi$ with the definition
\begin{align*}
    \mathcal{D}_{FD}(p,q_\phi) \coloneqq &\mathbb{E}_{\xx\sim p} \biggl\{ \|\nabla_{\xx} \log p(\xx)\|_2^2 + \|\nabla_{\xx} \log q_\phi(\xx)\|_2^2 \\
    & - 2\langle\nabla_{\xx} \log p(\xx), \nabla_{\xx} \log q_\phi(\xx)\rangle \biggl\}.
\end{align*}
Under certain mild conditions, the objective equals to 
\begin{align}\label{eqn:sm_objective}
    \mathcal{L}(\phi) = \mathbb{E}_{p} \biggl\{& \|\nabla_{\xx} \log q_\phi(\xx)\|_2^2 + 2 \Delta_{\xx} \log q_\phi(\xx)  \biggl\}.
\end{align}
This objective can be estimated only through samples from $p$,  thus is tractable when $q_\phi$ is well-defined. Moreover, one only needs to define a score network $\bm{s}_\phi(\xx)\colon \mathbb{R}^D \to \mathbb{R}^D$ instead of a density model to represent the parametric score function. This technique was named after Score Matching. Other variants of score matching were also studied \citep{ssm, fdsm, meng2020autoregressive, lu2022maximum,bao2020bi}. 

\paragraph{Denoising Score Matching.}
Notice that the standard score-matching objective has a term $\Delta_{\xx} \log q_\phi(\xx)$ that is computationally expensive. Denoising score matching \citep{vincent2011connection} is introduced to remedy this issue. The concept of denoising score matching is to perturb the sampling distribution by adding small Gaussian noises and then letting the score network match the conditional score functions. Specifically, let $\xx\sim p$ be drawn from the sampling distribution. And let $\xx_\sigma = \xx + \sigma\epsilon, ~~ \epsilon\sim \mathcal{N}(\bm{0}, \mathcal{I})$. When $\sigma$ is small, the distribution $p_\sigma$ of $\xx_\sigma$ is a good approximation of the sampling distribution $p$ of $\xx_0$. We use $p(\xx_\sigma|\xx_0)$ to denote the conditional distribution. If we want to use a parametric distribution $q_\phi$ (or a neural score function $\bm{s}_{\phi}(.)$) to approximate the distribution of  The matching objective of denoising score matching is to minimize 
\begin{align}\label{eqn:dsm_objective}
    \nonumber
    \mathcal{L}_{DSM}(\phi) = & \mathbb{E}_{\xx_0\sim p, \atop \xx_\sigma|\xx_0\sim p(\xx_\sigma|\xx_0)} \bigg\{\|\nabla_{\xx_\sigma} \log q(\xx_\sigma)\\
    & - \nabla_{\xx_\sigma} \log p(\xx_\sigma|\xx_0) \|_2^2 \bigg\}.
\end{align}

\section{DENOISING FISHER TRAINING}
In this section, we introduce the denoising Fisher training (DFT) method and show its equivalence to minimizing the Fisher divergence. The DFT incorporates a noise-injection and denoising mechanism into the training of implicit sampler and shows significant stability than the previous Fisher training method in both theoretical and empirical aspects. 

\subsection{The Training Objective}\label{sec:objective}

\paragraph{The Problem Setup.}
Let $g_\theta(\cdot)\colon \mathbb{R}^{D_Z}\to \mathbb{R}^{D_X}$ be an implicit sampler (i.e., a neural network transform), $p_z$ the latent distribution, $p_\theta$ the sampler induced distribution $\xx=g_\theta(\zz)$, and $q(\xx)$ the un-normalized target. Our goal in this section is to train the implicit sampler $g_\theta$ such that $p_\theta(\xx)$ equals $q(\xx)$. Instead of directly minimizing the Fisher divergence between $p_\theta(\xx)$ and $q(\xx)$, we add some little noise to the implicit sampler to tweak the sampler distribution with 
\begin{align}
    \xx_\sigma \coloneqq \xx_0 + \sigma\epsilon, ~~\xx_0 = g_\theta(\zz), ~~ \epsilon\sim \mathcal{N}(\bm{0}, \mathcal{I})
\end{align}
Therefore the conditional distribution $p(\xx_\sigma|\xx_0) = \mathcal{N}(\xx_0, \sigma^2\mathcal{I})$ has an explicit form. When $\sigma$ is set to be small, the distribution $p_{\theta, \sigma}$ which represents the distribution of $\xx_\sigma$ is a sufficiently good approximation of the implicit distribution $p_\theta$ of $\xx_0$. We denote $\bm{s}_q(\xx_\sigma)\coloneqq \nabla_{\xx_\sigma} \log q(\xx_\sigma)$ and $\bm{s}_{\theta,\sigma}(\xx_\sigma)\coloneqq \nabla_{\xx_\sigma} \log p_{\theta,\sigma}(\xx_\sigma)$. Our goal is to minimize the Fisher divergence between $p_{\sigma,\theta}(\xx)$ and $q(\xx)$ which writes
\begin{align}\label{eqn:tFD}
    &\mathcal{D}_{FD}(\theta) \coloneqq \mathbb{E}_{\xx_\sigma \sim p_{\theta, \sigma}} \|\bm{s}_{q}(\xx_\sigma) - \bm{s}_{\theta, \sigma}(\xx_\sigma)\|_2^2 
\end{align}
\paragraph{The Intractabe Objective}
To make the derivation neat, we may use the notion $\xx_\sigma(\theta)$ and $\xx_\sigma$ interchangeably to emphasize the parameter dependence of $\xx_\sigma$ and $\theta$ if necessary. 
In order to minimize the objective \eqref{eqn:tFD}, we take the $\theta$ gradient for such an objective, which writes
\begin{align}\label{eqn:tFD_grad_full}
    \nonumber
    &\frac{\partial}{\partial\theta} \mathcal{D}_{FD}(\theta) = \frac{\partial}{\partial\theta}\mathbb{E}_{\xx_\sigma \sim p_{\theta, \sigma}} \|\bm{s}_{q}(\xx_\sigma) - \bm{s}_{\theta, \sigma}(\xx_\sigma)\|_2^2 \\
    \nonumber
    & = \mathbb{E}_{\xx_\sigma \sim p_{\sigma,\theta}} \bigg\{  \frac{\partial}{\partial \xx_\sigma}\big\{ \|\bm{s}_{q}(\xx_\sigma) - \bm{s}_{\theta, \sigma}(\xx_\sigma)\|_2^2 \big\} \frac{\partial \xx_\sigma(\theta)}{\partial\theta} \\ 
    \nonumber
    & - 2 \big[ \bm{s}_{q}(\xx_\sigma) - \bm{s}_{\theta,\sigma}(\xx_\sigma) \big]^T \frac{\partial}{\partial\theta} \bm{s}_{\theta,\sigma}(\xx_\sigma) \bigg\}\\
    & = \operatorname{Grad}_{1}(\theta) + \operatorname{Grad}_{1}(\theta).
\end{align}
Where $\operatorname{Grad}_{1}(\theta)$ and $\operatorname{Grad}_{2}(\theta)$ are defined with
\begin{align}
    \nonumber
    & \operatorname{Grad}_{1}(\theta) = \mathbb{E}_{p_{\sigma,\theta}} \bigg\{  \frac{\partial}{\partial \xx_\sigma}\big\{ \|\bm{s}_{q}(\xx_\sigma) - \bm{s}_{\theta, \sigma}(\xx_\sigma)\|_2^2 \big\} \frac{\partial \xx_\sigma(\theta)}{\partial\theta} \bigg\}, \\
    \nonumber
    & \operatorname{Grad}_{2}(\theta) = \mathbb{E}_{p_{\sigma,\theta}} \bigg\{  - 2 \big[ \bm{s}_{q}(\xx_\sigma) - \bm{s}_{\theta,\sigma}(\xx_\sigma) \big]^T \frac{\partial}{\partial\theta} \bm{s}_{\theta,\sigma}(\xx_\sigma) \bigg\}.
\end{align}
The gradient formula of \eqref{eqn:tFD_grad_full} considers all path derivatives concerning parameter $\theta$. We put a detailed derivation in the Appendix. 

Notice that the first gradient term $\operatorname{Grad}_{1}(\theta)$ can be obtained if we stop the $\theta$ gradient for $\bm{s}_{\theta,\sigma}(.)$, i.e. $\bm{s}_{\operatorname{sg}[\theta],\sigma}(.)$ and minimize a corresponding loss function 
\begin{align}\label{eqn:tFD_loss1}
    \mathcal{L}_{1}(\theta) &= \mathbb{E}_{\xx_\sigma \sim p_{\sigma,\theta}} \bigg\{ \|\bm{s}_{q}(\xx_\sigma) - \bm{s}_{\operatorname{sg}[\theta], \sigma}(\xx_\sigma)\|_2^2  \bigg\} \\
    \nonumber
    &= \mathbb{E}_{\zz\sim p_z,\epsilon\sim \mathcal{N}(\bm{0}, \mathcal{I}) \atop \xx_\sigma = g_\theta(\zz)+\sigma\epsilon} \bigg\{ \|\bm{s}_{q}(\xx_\sigma) - \bm{s}_{\operatorname{sg}[\theta], \sigma}(\xx_\sigma)\|_2^2 \bigg\}
\end{align}
However, the second gradient $\operatorname{Grad}_{2}(\theta)$ include the term $\frac{\partial}{\partial \theta} \bm{s}_{\theta, \sigma}(.)$ which is unknown yet intractable. This is because, for the implicit sampler, we only have efficient samples from the implicit distribution, but the score function $\bm{s}_{\theta, \sigma}(.)$ along with its $\theta$ derivative is unknown.

\paragraph{The Tractable yet Equivalent Objective.}
Though gradient $\operatorname{Grad}_{2}(\theta)$ is intractable, fortunately, in this paper, we can address such an issue by introducing theoretical tools in Theorem \ref{thm:score_derivative_identity}. 
\begin{theorem}\label{thm:score_derivative_identity}
    If distribution $p_{\theta, \sigma}$ satisfies some wild regularity conditions, then we have for all vector-valued score function $\bm{s}_{q}(.)$, the equation holds for all parameter $\theta$:
    \begin{align}
    \nonumber
    & \mathbb{E}_{\xx_\sigma \sim p_{\sigma,\theta}} \bigg\{  - 2 \big[ \bm{s}_{q}(\xx_\sigma) - \bm{s}_{\theta,\sigma}(\xx_\sigma) \big]^T \frac{\partial}{\partial\theta} \bm{s}_{\theta,\sigma}(\xx_\sigma) \bigg\} \\
    \nonumber
    &= \frac{\partial}{\partial\theta}  \mathbb{E}_{\zz\sim p_Z, \xx_0 = g_\theta(\zz), \atop \epsilon\sim \mathcal{N}(\bm{0},\mathcal{I}),\xx_\sigma = \xx_0 + \sigma\epsilon} \bigg\{2 \big[ \bm{s}_{q}(\xx_\sigma) - \bm{s}_{\operatorname{sg}[\theta],\sigma}(\xx_\sigma) \big]^T\\
    \label{eqn:tractable_loss_grad2}
    & \big[\bm{s}_{\operatorname{sg}[\theta], \sigma}(\xx_\sigma) - \nabla_{\xx_{\sigma}} \log p(\xx_\sigma|\xx_0) \big] \bigg\}
    \end{align}
\end{theorem}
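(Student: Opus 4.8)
The plan is to expand the right-hand side of \eqref{eqn:tractable_loss_grad2} and show it collapses onto $\operatorname{Grad}_{2}(\theta)$, by applying the denoising (Tweedie-type) cross-term identity once and then exploiting that the resulting equivalent loss has an integrand which vanishes on the ``diagonal''. Fix the base parameter $\theta$ at which we differentiate, let $\vartheta$ denote the moving copy, and set $\bm{v}(\cdot)\coloneqq\bm{s}_{q}(\cdot)-\bm{s}_{\operatorname{sg}[\theta],\sigma}(\cdot)$, which the stop-gradient freezes (no $\vartheta$-dependence). With $\xx_{0}=g_{\vartheta}(\zz)$ and $\xx_{\sigma}=\xx_{0}+\sigma\epsilon$, the right-hand side of \eqref{eqn:tractable_loss_grad2} is $\tfrac{\partial}{\partial\vartheta}\mathcal{L}_{2}(\vartheta)\big|_{\vartheta=\theta}$ with $\mathcal{L}_{2}(\vartheta)=\mathbb{E}_{\zz\sim p_{Z},\epsilon\sim\mathcal{N}(\bm{0},\mathcal{I})}\big[2\bm{v}(\xx_{\sigma})^{T}(\bm{s}_{\operatorname{sg}[\theta],\sigma}(\xx_{\sigma})-\nabla_{\xx_{\sigma}}\log p(\xx_{\sigma}|\xx_{0}))\big]$; since $\nabla_{\xx_{\sigma}}\log p(\xx_{\sigma}|\xx_{0})=-\epsilon/\sigma$ is $\vartheta$-free, the only $\vartheta$-dependence in $\mathcal{L}_{2}$ enters through $\xx_{0},\xx_{\sigma}$, and $\mathcal{L}_{2}$ is a genuinely computable expectation given a frozen copy of $\bm{s}_{\theta,\sigma}$.

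First I would record the denoising cross-term identity: for every fixed square-integrable vector field $\bm{v}$, $\mathbb{E}_{\zz,\epsilon}\big[\bm{v}(\xx_{\sigma})^{T}\nabla_{\xx_{\sigma}}\log p(\xx_{\sigma}|\xx_{0})\big]=\mathbb{E}_{\xx_{\sigma}\sim p_{\vartheta,\sigma}}\big[\bm{v}(\xx_{\sigma})^{T}\bm{s}_{\vartheta,\sigma}(\xx_{\sigma})\big]$, which follows by writing $p_{\vartheta,\sigma}(\xx)=\int p_{\vartheta}(\xx_{0})p(\xx|\xx_{0})\diff\xx_{0}$, differentiating under the integral so that $\nabla p_{\vartheta,\sigma}(\xx)=\int p_{\vartheta}(\xx_{0})p(\xx|\xx_{0})\nabla_{\xx}\log p(\xx|\xx_{0})\diff\xx_{0}$, pairing with $\bm{v}$, and using $\int\bm{v}^{T}\nabla p_{\vartheta,\sigma}=\mathbb{E}_{p_{\vartheta,\sigma}}[\bm{v}^{T}\bm{s}_{\vartheta,\sigma}]$. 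Then I would expand the integrand of $\mathcal{L}_{2}$ as $2\bm{v}^{T}\bm{s}_{\operatorname{sg}[\theta],\sigma}-2\bm{v}^{T}\nabla_{\xx_{\sigma}}\log p(\xx_{\sigma}|\xx_{0})$: since $\bm{v}$ and $\bm{s}_{\operatorname{sg}[\theta],\sigma}$ do not depend on $\xx_{0}$, the first term marginalizes to $2\mathbb{E}_{p_{\vartheta,\sigma}}[\bm{v}^{T}\bm{s}_{\operatorname{sg}[\theta],\sigma}]$ and the identity converts the second into $2\mathbb{E}_{p_{\vartheta,\sigma}}[\bm{v}^{T}\bm{s}_{\vartheta,\sigma}]$, giving $\mathcal{L}_{2}(\vartheta)=\int p_{\vartheta,\sigma}(\xx)\phi_{\vartheta}(\xx)\diff\xx$ with $\phi_{\vartheta}(\xx)\coloneqq 2\bm{v}(\xx)^{T}(\bm{s}_{\operatorname{sg}[\theta],\sigma}(\xx)-\bm{s}_{\vartheta,\sigma}(\xx))$.

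The crux is to differentiate this at $\vartheta=\theta$ by the product rule, $\tfrac{\partial}{\partial\vartheta}\mathcal{L}_{2}\big|_{\theta}=\int(\partial_{\vartheta}p_{\vartheta,\sigma})\big|_{\theta}\,\phi_{\theta}\,\diff\xx+\int p_{\theta,\sigma}\,(\partial_{\vartheta}\phi_{\vartheta})\big|_{\theta}\,\diff\xx$, and to notice that at $\vartheta=\theta$ one has $\bm{s}_{\operatorname{sg}[\theta],\sigma}=\bm{s}_{\theta,\sigma}$, hence $\phi_{\theta}\equiv 0$ pointwise, so the first integral --- precisely the intractable ``how the marginal density's own score moves with $\vartheta$'' contribution --- drops out; since $(\partial_{\vartheta}\phi_{\vartheta})\big|_{\theta}=-2\bm{v}(\xx)^{T}\partial_{\vartheta}\bm{s}_{\vartheta,\sigma}(\xx)\big|_{\theta}$, what remains is $\mathbb{E}_{p_{\theta,\sigma}}\big[-2(\bm{s}_{q}(\xx_{\sigma})-\bm{s}_{\theta,\sigma}(\xx_{\sigma}))^{T}\tfrac{\partial}{\partial\theta}\bm{s}_{\theta,\sigma}(\xx_{\sigma})\big]=\operatorname{Grad}_{2}(\theta)$, i.e.\ the left-hand side of \eqref{eqn:tractable_loss_grad2}. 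As an independent check I would also record the elementary decomposition $\mathcal{D}_{FD}(\vartheta)=\mathbb{E}_{p_{\vartheta,\sigma}}\|\bm{v}\|_{2}^{2}+\mathcal{L}_{2}(\vartheta)+\mathbb{E}_{p_{\vartheta,\sigma}}\|\bm{s}_{\theta,\sigma}-\bm{s}_{\vartheta,\sigma}\|_{2}^{2}$ (expand $\|a+b\|_{2}^{2}$ with $a=\bm{s}_{q}-\bm{s}_{\theta,\sigma}$, $b=\bm{s}_{\theta,\sigma}-\bm{s}_{\vartheta,\sigma}$, and use the previous step for the cross term): the last summand is a non-negative relative Fisher divergence between $p_{\vartheta,\sigma}$ and $p_{\theta,\sigma}$, minimized over $\vartheta$ with value $0$ at $\vartheta=\theta$ and therefore with vanishing $\vartheta$-gradient there, so subtracting $\operatorname{Grad}_{1}=\tfrac{\partial}{\partial\vartheta}\mathbb{E}_{p_{\vartheta,\sigma}}\|\bm{v}\|_{2}^{2}\big|_{\theta}$ (the defining property of $\mathcal{L}_{1}$) from $\tfrac{\partial}{\partial\vartheta}\mathcal{D}_{FD}\big|_{\theta}=\operatorname{Grad}_{1}+\operatorname{Grad}_{2}$ again yields $\operatorname{Grad}_{2}=\tfrac{\partial}{\partial\vartheta}\mathcal{L}_{2}\big|_{\theta}$.

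I expect the real difficulty to be conceptual rather than computational: it is the step of realizing that, after rewriting $\mathcal{L}_{2}$ with the denoising identity, its integrand vanishes on the diagonal $\vartheta=\theta$, which is exactly what kills the intractable $\tfrac{\partial}{\partial\theta}$-of-the-marginal-score term. The remaining work is discharging the stated mild regularity conditions: justifying differentiation under the integral sign for both $\mathcal{L}_{2}(\vartheta)$ and $\nabla p_{\vartheta,\sigma}$ (a Leibniz argument), the Fubini interchange marginalizing out $\xx_{0}$, and the existence of $\tfrac{\partial}{\partial\theta}\bm{s}_{\theta,\sigma}$ together with integrability of $\bm{v}^{T}\tfrac{\partial}{\partial\theta}\bm{s}_{\theta,\sigma}$ under $p_{\theta,\sigma}$. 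Gaussian smoothing by $\sigma>0$, smoothness of $\vartheta\mapsto g_{\vartheta}$, and mild moment/growth bounds on $p_{Z}$ and on $\bm{s}_{q}$ are exactly what make $p_{\vartheta,\sigma}$ and $\bm{s}_{\vartheta,\sigma}$ smooth with controlled tails, so those hypotheses suffice; I would carry out the formal verification in an appendix.
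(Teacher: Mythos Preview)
Your argument is correct and rests on the same key ingredient as the paper's proof---the denoising cross-term identity (Lemma~\ref{lemma:score_projectioin} in the paper)---but the route you take from that lemma to the conclusion is genuinely different. The paper differentiates the identity $\mathbb{E}_{\xx_0,\xx_\sigma}\big[\bm{u}^T(\bm{s}_{\theta,\sigma}-\nabla_{\xx_\sigma}\log q_\sigma(\xx_\sigma|\xx_0))\big]=0$ itself with respect to $\theta$, expands all chain-rule contributions through $\xx_0$ and $\xx_\sigma$, cancels a pair of terms by invoking the lemma once more, and then matches the surviving terms against the reparametrized gradient of $\mathcal{L}_2$. You instead apply the identity \emph{first} to rewrite $\mathcal{L}_2(\vartheta)$ as $\int p_{\vartheta,\sigma}\,\phi_\vartheta$ with $\phi_\vartheta=2\bm{v}^T(\bm{s}_{\operatorname{sg}[\theta],\sigma}-\bm{s}_{\vartheta,\sigma})$, and then observe that $\phi_\theta\equiv 0$ kills the $(\partial_\vartheta p_{\vartheta,\sigma})$-term in the product rule, leaving exactly $\operatorname{Grad}_2(\theta)$. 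This ``vanishing on the diagonal'' framing is cleaner and makes transparent \emph{why} the intractable density-derivative contribution disappears, whereas the paper's approach keeps all path derivatives explicit and closer to what an autograd engine actually computes. Your second derivation via the decomposition $\mathcal{D}_{FD}(\vartheta)=\mathcal{L}_1(\vartheta)+\mathcal{L}_2(\vartheta)+\mathcal{D}_{FD}(p_{\vartheta,\sigma},p_{\theta,\sigma})$, with the last term having a critical point at $\vartheta=\theta$, is an elegant sanity check that does not appear in the paper and gives additional structural insight into why $\mathcal{L}_1+\mathcal{L}_2$ suffices.
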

We put the detailed proof in the Appendix. Theorem \ref{thm:score_derivative_identity} shows that the intractable gradient $\operatorname{Grad}_{2}(\theta)$ happens to be the same as the gradient of a tractable loss \eqref{eqn:tractable_loss_grad2} if we can have a good pointwise approximation of the score network $\bm{s}_{\operatorname{sg}[\theta], \sigma}(\cdot, \cdot)$. In practice, such an approximation is not difficult due to mature score function estimation techniques as we will discuss in Section \ref{sec:background}. Therefore, we can minimize a tractable loss function \eqref{eqn:tFD_loss2} using gradient-based algorithms to get the intractable gradient $\operatorname{Grad}_{2}(\theta)$:
\begin{align}\label{eqn:tFD_loss2}
    \nonumber
    \mathcal{L}_2(\theta) &= \mathbb{E}_{\zz\sim p_Z, \xx_0 = g_\theta(\zz), \atop \epsilon\sim \mathcal{N}(\bm{0},\mathcal{I}),\xx_\sigma = \xx_0 + \sigma\epsilon} \bigg\{2 \big[ \bm{s}_{q}(\xx_\sigma) - \bm{s}_{ \operatorname{sg}[\theta],\sigma}(\xx_\sigma) \big]^T \\
    \nonumber
    & \big[\bm{s}_{\operatorname{sg}[\theta], \sigma}(\xx_\sigma) - \nabla_{\xx_{\sigma}} \log p(\xx_\sigma|\xx_0) \big] \bigg\}\\
    \nonumber
    & = \mathbb{E}_{\zz\sim p_Z, \xx_0 = g_\theta(\zz), \atop \epsilon\sim \mathcal{N}(\bm{0},\mathcal{I}),\xx_\sigma = \xx_0 + \sigma\epsilon} \bigg\{2\big[\bm{s}_{q}(\xx_\sigma) - \bm{s}_{\operatorname{sg}[\theta],\sigma}(\xx_\sigma) \big]^T \\
    & \big[\bm{s}_{\operatorname{sg}[\theta], \sigma}(\xx_\sigma) - \nabla_{\xx_{\sigma}} \log p(\xx_\sigma|\xx_0) \big] \bigg\}
\end{align}
Combining with \eqref{eqn:tFD_loss1} and \eqref{eqn:tFD_loss2} with \eqref{eqn:tFD_grad_full}, we can have the final equivalent loss function that can minimize the Fisher divergence which writes 
\begin{align}\label{eqn:tFD_loss_full}
    \mathcal{L}_{DFT}(\theta) = \mathcal{L}_1(\theta) + \mathcal{L}_2(\theta)
\end{align}
Where $\mathcal{L}_1(\theta)$ and $\mathcal{L}_2(\theta)$ are defined in \eqref{eqn:tFD_loss1} and \eqref{eqn:tFD_loss2}. We name our training objective the denoising Fisher training objective because it originates from minimizing the Fisher divergence between the slightly tweaked sampler and the target distribution. 

\paragraph{The Practical Algorithm}
\begin{algorithm}[]
\SetAlgoLined
\KwIn{un-normalized target $\log q(\xx)$, latent distribution $p_z(\zz)$, implicit sampler $g_\theta(.)$, score network $\bm{s}_\phi(.)$, mini-batch size B, max iteration M.}
Randomly initialize $(\theta^{(0)}, \phi^{(0)})$.\\
\For{$t$ in 0:M}{
\emph{// update score network parameter}\\
Freeze $\theta$, free $\phi$\\
Get mini-batch without parameter dependence\\
$x_i = \operatorname{sg}[g_{\theta^{(t)}}(\zz_i)], \zz_i\sim p_z(\zz), i=1,..,B$.\\
Update Score network $\phi$ by minimizing \eqref{eqn:sm_objective} or \eqref{eqn:dsm_objective}, with samples drawn from implicit samplers and detach the $\theta$ gradient \\
\emph{// update sampler parameter}\\
Freeze $\phi$, free $\theta$\\
Get mini-batch differentiable samples with $\theta$-parameter-dependecne\\
$\xx_i = g_{\theta^{(t)}}(\zz_i), \zz_i\sim p_z(\zz), i=1,..,B$.\\
Calculate loss with \eqref{eqn:tFD_loss_full}\\
Update $\theta$ with gradient-based optimization algorithms to get $\theta^{(t+1)}$.}
\Return{$(\theta,\phi)$.}
\caption{The Denoising Fisher Training (DFT) Algorithm for Neural Implicit Samplers.}
\label{alg:denoise_training}
\end{algorithm}
Till now, we can formally define the denoising Fisher training algorithm for implicit samplers in the Algorithm \ref{alg:denoise_training}. As we can see, the algorithm involves two neural networks, with one $g_\theta(\cdot)$ being the implicit sampler and $\bm{s}_{\phi}(\cdot)$ being the neural score network to estimate the score function of the implicit sampler. The algorithm involves iterations that alternatively update each neural network while freezing the other one. In the first step of each iteration, we freeze the $\theta$ and we generate a batch of samples efficiently from the neural implicit sampler $g_\theta$. Then we use the generated samples to update the score network $\bm{s}_{\phi}$ to approximate the implicit sampler's score function. In the second step, we freeze $\phi$ and minimize \eqref{eqn:tFD_loss_full} to update $g_\theta$. 

\begin{figure}
\centering
\includegraphics[width=1.0\linewidth]{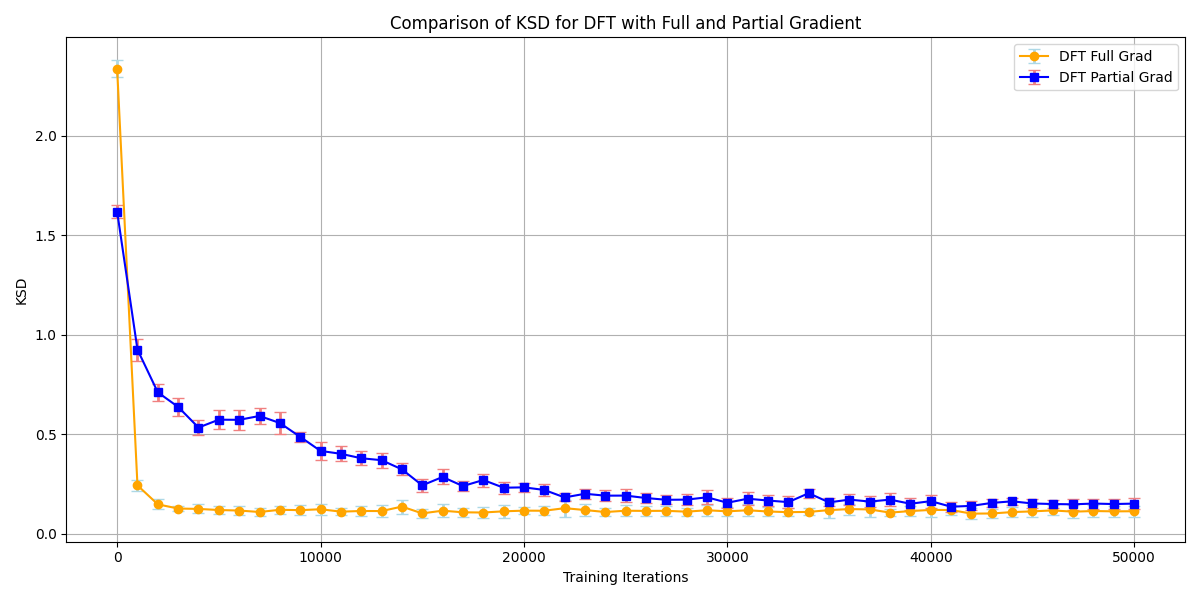}
\caption{The comparison of KSD values of DFT-NS using full and partial gradients in \eqref{eqn:tFD_grad_full}.}
\label{fig:dft_compare}
\end{figure}

\begin{figure*}
\centering
\includegraphics[width=1.0\linewidth]{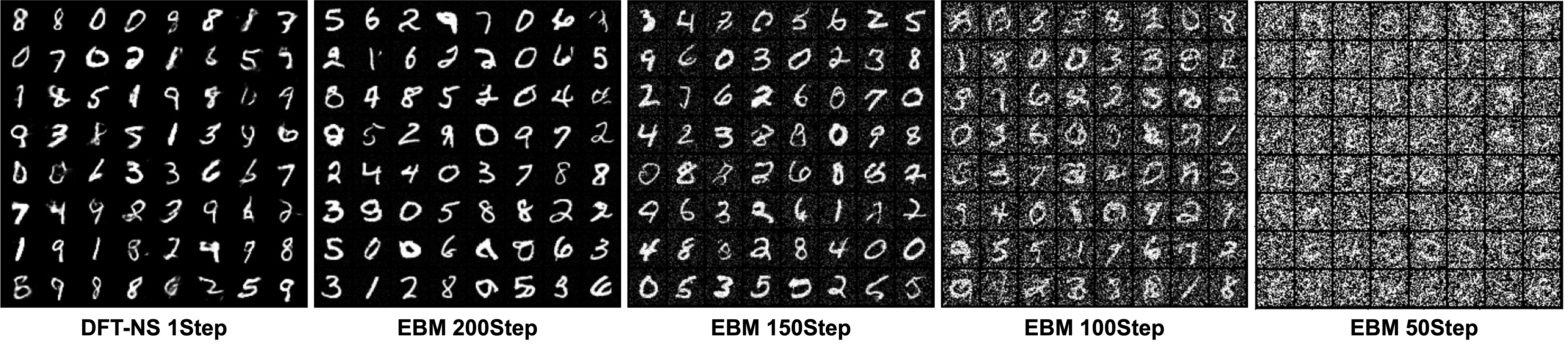}
\caption{A visualization of DFT-NS on MNIST again default MCMC sampling result from DeepEBM. The one-step DFT-NS outperforms MCMC samplers with 200 sampling steps as shown in Table \ref{tab:t5}.}
\label{fig:mnist_compare}
\end{figure*}
\section{EXPERIMENTS}\label{sec:exp}
In this section, we validate neural samplers trained with DFT across a range of three distinct sampling benchmarks following \citet{luo2024entropy}. These benchmarks span a spectrum of complexity, from low-dimensional (2-dimensional targets) to high-dimensional (784-dimensional image targets) tasks. 

\subsection{2D Synthetic Target Sampling}
\paragraph{Experiment Settings.}
We leverage the open-source implementation from \citet{sharrock2023coin}\footnote{\url{https://github.com/louissharrock/coin-svgd}} and adhere to the experimental configurations established by \citet{luo2024entropy} for the training of samplers on six 2D target distributions. Our comparison encompasses a range of methods, including 3 MCMC baselines: Stein variational gradient descent (SVGD) \citep{liu2016stein}, Langevin dynamics (LD) \citep{welling2011bayesian}, and Hamiltonian Monte Carlo (HMC) \citep{neal2011mcmc}; one explicit baseline: coupling normalizing flow \citep{Dinh2016DensityEU}; and four implicit samplers: KL training sampler \citep{luo2024entropy}, Fisher training sampler \citep{luo2024entropy}, KSD neural sampler (KSD-NS) \citep{nss}, and SteinGan \citep{wang2016learning}. All implicit samplers are designed with a uniform neural architecture, which consists of a four-layer multilayer perceptron (MLP) with 400 hidden units per layer, and employ ELU activation functions for both the sampler and the score network where applicable.

We evaluate sampling results with kernelized Stein's discrepancy (KSD) \citep{Liu2016AKS}, which is a widely used metric for evaluating the quality of a batch of samples to a target distribution with score functions \citep{Liu2016AKS,gorham2015measuring}. In our evaluations, we utilize the KSD with the inverse multiquadric (IMQ) kernel, which is implemented through the open-source package sgmcmcjax\footnote{\url{https://github.com/jeremiecoullon/SGMCMCJax}}. We put the quantitative results in Table \ref{tab:t1}.

\begin{table*}[h]
\caption{KSD values of samplers. For MCMC, we set the Stepsize=0.01, num particles=500, num chains=20.}
\label{tab:1}
\begin{center}
\begin{scriptsize}
\begin{sc}
\begin{tabular}{lccccccc}
\toprule
\textbf{Sampler} & \textbf{Gaussian} & \textbf{MOG2} & \textbf{Rosenbrock} & \textbf{Donut} & \textbf{Funnel} & \textbf{Squiggle} \\
\midrule
MCMC &  &  &  &  &  & \\
\midrule
SVGD(500) & $0.013\pm 0.001$ & $0.044\pm 0.006$ & $0.053\pm 0.002$ & $0.057\pm 0.004$ & $0.052\pm 0.001$ & $0.024\pm 0.002$ \\
SVGD(200) & $0.0367\pm 0.016$ & $0.0545\pm 0.006$ & $0.513\pm 0.040$ & $0.0834\pm 0.007$ & $0.0502\pm 0.002$ & $0.0391\pm 0.006$ \\
SVGD(100) & $0.776\pm 0.069$ & $0.688\pm 0.027$ & $1.498\pm 0.078$ & $0.361\pm 0.023$ & $0.189\pm 0.023$ & $0.156\pm 0.017$ \\
SVGD (50) & $1.681\pm 0.073$ & $1.170\pm 0.030$ & $2.673\pm 0.163$ & $1.089\pm 0.035$ & $0.612\pm 0.072$ & $0.682\pm 0.045$ \\
LD(500) & $0.107\pm 0.025$ & $0.099\pm 0.008$ & $0.152\pm 0.030$ & $0.107\pm 0.020$ & $0.116\pm 0.029	$ & $0.139\pm 0.030$ \\
LD(200) & $0.141\pm 0.024$ & $0.109\pm 0.020$ & $0.284\pm 0.037$ & $0.109\pm 0.021$ & $0.144\pm 0.017$ & $0.176\pm 0.057$ \\
LD(100) & $0.277\pm 0.042$ & $0.276\pm 0.018$ & $0.691\pm 0.030$ & $0.115\pm 0.024$ & $0.231\pm 0.029$ & $0.264\pm 0.055$ \\
LD (50) & $0.669\pm 0.073$ & $0.678\pm 0.016$ & $1.198\pm 0.038$ & $0.246\pm 0.037$ & $0.387\pm 0.033$ & $0.419\pm 0.046$ \\
HMC(500) & $0.094\pm 0.020$ & $0.106\pm 0.020$ & $0.134\pm 0.034$ & $0.113\pm 0.020$ & $0.135\pm 0.010$ & $0.135\pm 0.033$ \\
HMC(200) & $0.109\pm 0.025$ & $0.104\pm 0.018$ & $0.137\pm 0.023$ & $0.107\pm 0.016$ & $0.132\pm 0.028$ & $0.143\pm 0.024$ \\
HMC(100) & $0.114\pm 0.031$ & $0.106\pm 0.018$ & $0.205\pm 0.031$ & $0.115\pm 0.017$ & $0.129\pm 0.029$ & $0.159\pm 0.038$ \\
HMC (50) & $0.199\pm 0.038$ & $0.176\pm 0.033$ & $0.501\pm 0.032$ & $0.112\pm 0.019$ & $0.176\pm 0.033$ & $0.234\pm 0.046$ \\
\midrule
Neural Samplers &  &  &  &  &  & \\
\midrule
Coup-Flow & $0.102\pm 0.028$ & $0.158\pm 0.019$ & $0.150\pm 0.026	$ & $0.239\pm 0.013$ & $0.269\pm 0.019$ & $0.130\pm 0.026$ \\
KSD-NS & $0.206\pm 0.043$ & $1.129\pm 0.197$ & $1.531\pm 0.058$ & $0.341\pm 0.039$ & $0.396\pm 0.221$ & $0.462\pm 0.065$ \\
SteinGAN & $0.091\pm 0.013$ & $0.131\pm 0.011$ & $0.121\pm 0.022$ & $0.104\pm 0.013$ & $0.129\pm 0.020$ & $0.124\pm 0.018$ \\
{Fisher-NS} & $0.095\pm 0.016$ & $0.118\pm 0.013$ & $0.157\pm 0.030$ & $0.179\pm 0.028$ & $7.837\pm 1.614$ & $0.202\pm 0.037$ \\
{KL-NS} & $0.099\pm 0.015$ & $0.104\pm 0.015$ & $0.123\pm 0.021$ & $0.109\pm 0.015$ & $0.115\pm 0.012$ & $0.118\pm 0.024$ \\
\textbf{DFT-NS} & $0.098\pm 0.019$ & $0.101\pm 0.025$ & $0.121\pm 0.016$ & $0.100\pm 0.013$ & $0.081\pm 0.009$ & $0.115\pm 0.018$ \\
\bottomrule
\end{tabular}
\end{sc}
\end{scriptsize}
\end{center}
\label{tab:t1}
\end{table*}

Across all target distributions, we have trained all implicit neural samplers (DFT-NS in Table \ref{tab:1}) using the Adam optimizer. We use the learning rate of 2e-5 and a batch size of 5000 by default unless specifically claimed. To evaluate the kernelized Stein's discrepancy (KSD), we have conducted assessments every 1000 iterations, using 500 samples and repeating the process 20 times for each evaluation. The lowest mean KSD observed during training iterations has been selected as our final result. The neural architecture and evaluation setup is the same as \citet{luo2024entropy}.

\paragraph{Performance Analysis.}
As Table \ref{tab:t1} shows, 
The Stein variational gradient descent (SVGD) with 500 sampling steps shows the best KSD values. However, when the sampling steps of SVGD are limited to less than 200, the performance drops significantly. For relatively simple target distribution such as Gaussian distribution, the SteinGAN is the best. However, for complex targets such as Donut, Funnel, and Squiggle distributions, the DFT-NS is the best among all neural samplers. The visualization in Figure \ref{fig:big_vis} qualitatively confirms the quantitative observation. 

Notice in \eqref{eqn:tFD_grad_full}, we decompose the gradient into two terms. In Figure \ref{fig:dft_compare}, we compare the samplers' KSD that is trained using full gradient \eqref{eqn:tFD_grad_full} and solely the second gradient $\operatorname{Grad}_2(\theta)$ which we term the DFT partial grad on Funnel distribution. The results show that using a full gradient is consistently better than the partial gradient. However, as we can see, only using $\operatorname{Grad}_2(\theta)$ still leads to solid training. In our practical experiments, we find that sometimes balancing both gradients can result in better performances than using both or one of them.  

In summary, across both analytic and neural targets, DFT-NS demonstrate a marked advantage in efficiency while maintaining comparable or better performance than MCMC samplers. This efficiency makes DFT-NS a more viable option for a wide array of sampling tasks where high efficiency and reduced computational costs are of the essence.

\begin{figure}
\centering
\includegraphics[width=1.0\linewidth]{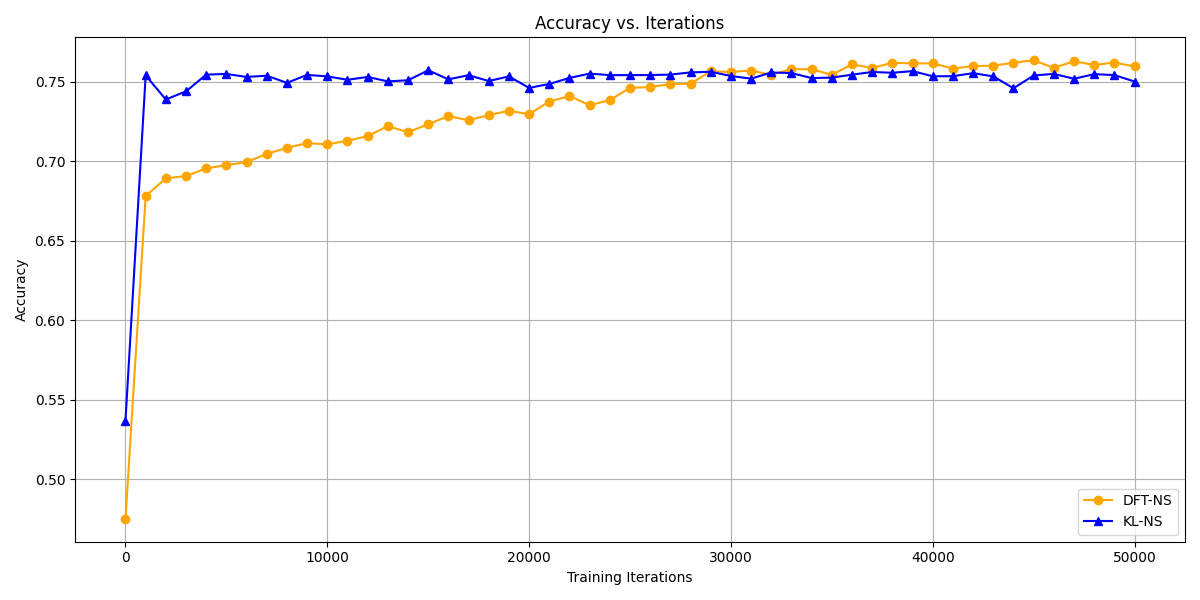}
\caption{The test-accuracy curve of DFT-NS (ours) and KL-NS \citep{luo2024entropy} for Bayesian Logistic Regression. Though KL-NS converges faster, DFT-NS shows better final accuracy than KL-NS with sufficient training iterations.} 
\label{fig:bayes_compare}
\end{figure}

\subsection{Bayesian Logistic Regression}
Having established the effectiveness of our sampler on low-dimensional 2D target distributions in the previous section, we now turn our attention to more complex, real-world scenarios. Bayesian logistic regression offers a set of medium-dimensional target distributions (ranging from 10 to 100 dimensions) that are ideal for this purpose. In this section, we pit our DFT neural samplers on solving this problem following \citet{nss} and \citet{luo2024entropy}. 

The primary objective of this experiment is to assess the performance of our proposed sampler in terms of test accuracy when applied to Bayesian logistic regression tasks. To provide a comprehensive evaluation, we also compare our sampler's performance against that of various MCMC samplers. This comparison will shed light on how our sampler stacks up against existing methods in a real-world application, allowing us to evaluate its practical utility and effectiveness in handling medium-dimensional target distributions.

\paragraph{Experiment settings.} 
We have adopted the experimental setup similar to that of \citet{nss} for the Bayesian logistic regression tasks. The Covertype dataset, which comprises 54 features and 581,012 observations, is a well-established benchmark for Bayesian inference. In line with \citet{nss}, we have specified the prior distribution for the weights as \(p(w|\alpha) = \mathcal{N}(w;~0, \alpha^{-1})\) and for \(\alpha\) as \(p(\alpha) = \text{Gamma}(\alpha;~ 1, 0.01)\). The dataset has been randomly divided into a training set, accounting for 80

Our objective is to develop an implicit DFT neural sampler that can efficiently sample from the posterior distribution. Once trained, the implicit sampler not only delivers the best accuracy but also operates hundreds of times faster than MCMC methods. In this experiment, we train DFT-NS and KL-NS using the same configuration as proposed in \citet{luo2024entropy}. Our evaluation metric is the test accuracy, which marks how well samplers can solve the Bayesian logistic regression. 

\paragraph{Performance Analysis.}
Table \ref{Covertype} shows the quantitative performances of different samplers. As we can see, the DFT with full loss in \eqref{eqn:tFD_loss_full} results in the best test accuracy of $76.36\%$, outperforming KL-NS \citet{luo2024entropy}, Fisher-NS \citep{nss}, and MCMC algorithms. The DFT-NS with only the second loss of \eqref{eqn:tFD_loss2} also shows a decent performance. 

We also conduct a detailed comparison of DFT-NS with KL-NS in Figure \ref{fig:bayes_compare} by recording the test accuracy of both samplers along the training process. The results show that KL-NS converges faster than DFT-NS, however, DFT-NS archives a better final test accuracy with sufficient training iterations. These experiments underscore the capability of our proposed training method to effectively manage real-world Bayesian inference tasks involving medium-dimensional data. 

\begin{table}[h]
\caption{Test Accuracies for Bayesian Logistic Regression on Covertype Dataset following \citet{nss}. DFT-NS is the neural implicit sampler trained with DFT with full loss in \eqref{eqn:tFD_loss_full}. DFT-NS$^*$ means we only use the second loss \eqref{eqn:tFD_loss2} for training.} 
\label{Covertype}
\begin{center}
\begin{small}
\begin{sc}
\begin{tabular}{cc}
\toprule
SGLD & DSVI \\
75.09\% $\pm$ 0.20\% & 73.46\% $\pm$ 4.52\%  \\
\midrule
SVGD & SteinGAN \\
74.76\% $\pm$ 0.47\% & 75.37\% $\pm$ 0.19\% \\ 
\midrule
Fisher-NS & {KL-IS(Ours)} \\
76.22\% $\pm$ 0.43\% & 75.95\% $\pm$ 0.002\% \\
\midrule
\textbf{DFT-NS(Ours)}$^*$ & \textbf{DFT-NS(Ours)} \\
75.79\% $\pm$ 0.55\% & 76.36\% $\pm$ 0.009\% \\
\bottomrule
\end{tabular}
\end{sc}
\end{small}
\end{center}
\end{table}

\begin{figure}
\centering
\includegraphics[width=1.0\linewidth]{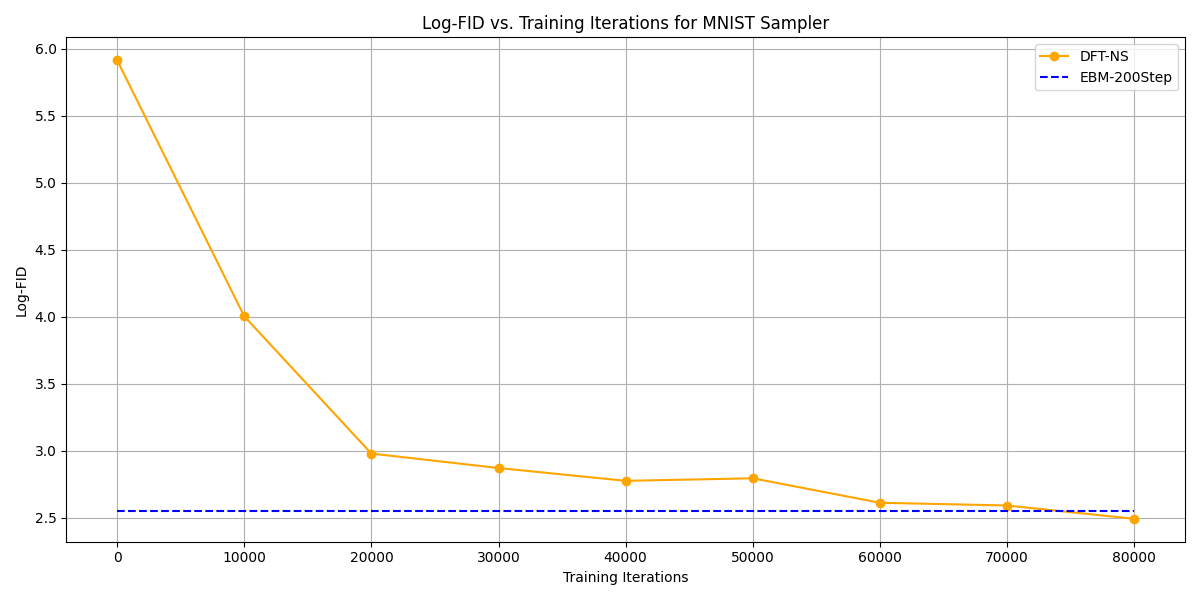}
\caption{The Log-FID curve of one-step DFT-NS on MNIST image distribution. The blue dashed line marks the logarithm of FID of 200-step MCMC (Annealed Langevin dynamics) samples from pre-trained DeepEBM \citep{Li2019LearningEM}.}
\label{fig:mnist_fid}
\end{figure}

\subsection{Sampling from Energy-based Models}
\begin{table}
\scriptsize
\centering
\caption{Comparison of Sampling Efficiency and FID values of multi-step MCMC samplers and one-step DFT neural sampler. The larger the FLOPS is, the more costs the sampler has.}
\begin{center}
\begin{tabular}{l|ccccc}
\toprule
Model & Steps & Infer Time (Sec) & FLOPS & FID \\
\midrule
EBM   & 250 & 0.8455 & 0.58G$\times$250 & 9.54 \\
EBM   & 200 & 0.6658 & 0.58G$\times$200 & 12.76 \\
EBM   & 150 & 0.5115 & 0.58G$\times$150 & 165.07 \\
EBM   & 100 & 0.3455 & 0.58G$\times$100 & 423.95 \\
EBM   & 50 & 0.1692 & 0.58G$\times$50 & 519.04 \\
\textbf{DFT-NS} & {1} & {0.0012} & {1.11G} & 12.07 \\
\bottomrule
\end{tabular}
\end{center}
\label{tab:t5}
\vspace{-5mm}
\end{table}

A pivotal benefit of employing an implicit sampler is its superior inference efficiency, which is particularly advantageous for applications where traditional sampling methods fall short, such as in the context of high-dimensional energy-based models (EBMs). The inefficiency of default sampling methods in these scenarios often stems from the complex and intricate nature of the models, which can make sampling a computationally demanding task.

\paragraph{Experiment settings.}
A neural network in an energy-based model (EBM) \citep{LeCun2006ATO} defines a negative energy function, essentially the log of an un-normalized probability distribution. Post-training, extracting samples from an EBM typically involves an annealed Markov Chain Monte Carlo process \citep{Xie2016ATO}, which is quite computationally demanding. In this experiment, we follow \citet{luo2024entropy} to evaluate the performance of DFT-NS to learn to sample from a deep EBM \citep{Li2019LearningEM} pre-trained on the MNIST image dataset. Considering the characteristics of image data, we closely follow \citet{luo2024entropy} to construct the neural sampler by stacking multiple convolutional neural network blocks. It processes a random Gaussian input of 128 dimensions to produce a 32x32 tensor, matching the resolution of the EBM's training. We pre-train the DeepEBM in-house following \citet{Li2019LearningEM}. Then we train the CNN-based implicit neural sampler with DFT using the un-normalized probability induced by the pre-trained DeepEBM. We follow the training settings of \citet{luo2024entropy}, using a batch size of 128 and Adam optimizer with a learning rate of 1e-4 on a single Nvidia-A100 GPU. The hours for one training trial is roughly three hours, which is conceptually cheap. 

To quantitatively evaluate DFT-NS and MCMC samplers, we follow \citet{luo2024entropy,heusel2017gans} to train an MNIST classifier and compute the Frechet Inception Distance (FID) \citep{heusel2017gans} using features extracted from the pre-trained classifier. We constructed the classifier with the wide-resnet neural network \citep{zagoruyko2016wide}. We compare the FID of samples generated from one-step DFT-NS and multi-step MCMCs with the DeepEBM. The smaller the FID is, the better the sample quality is.

\paragraph{Performance.}
Table \ref{tab:t5} shows the FID values and corresponding computational costs of each sampler. DFT-NS produces samples with better FID values than MCMC samplers with 200 sampling steps. But it is worse than MCMC samples with 250 sampling steps. This indicates that DFT-NS is better than MCMC with an acceleration rate larger than 200. Figure \ref{fig:mnist_compare} shows a visualization of samples drawn from DFT-NS and MCMC samplers. This visualization qualitatively confirms the quantitative results in Table \ref{tab:t5}. 

In Figure \ref{fig:mnist_compare}, we visualize the FID curve of the one-step DFT-NS versus the training iterations. We can conclude that DFT is a solid training approach that trains the neural implicit samplers to have a steady performance curve with the increase of training iterations. 

\section{CONCLUSION AND LIMITATIONS}
In this paper, we've introduced denoising implicit training (DFT), an innovative technique for training implicit samplers to draw samples from distributions with un-normalized density. We've shown theoretically that DFT is equivalent to minimizing the Fisher divergence between a slightly tweaked neural sampler distribution and the target distribution. Through intensive quantitative experiments on scales of small, medium, and high dimensions, we show that neural samplers trained with DFT have better performances than other neural samplers and MCMC samplers with a decent number of sampling steps. Besides, our theoretical assessments on deriving the tractable loss may shed light on broader applications when handling intractable Fisher divergences. 

Nevertheless, there are certain limitations to our proposed methods. First, the process of score estimation is not computationally inexpensive, making it a crucial avenue for research to develop a more efficient training algorithm that could eliminate the score estimation phase. Furthermore, currently, our sampler is confined to sampling tasks. Exploring how to extend this methodology to other applications, such as generative modeling, presents another intriguing area for future investigation.


\clearpage
\newpage
\bibliography{bib}

\begin{thebibliography}{63}
\providecommand{\natexlab}[1]{#1}
\providecommand{\url}[1]{#1}
\csname url@samestyle\endcsname
\providecommand{\newblock}{\relax}
\providecommand{\bibinfo}[2]{#2}
\providecommand{\BIBentrySTDinterwordspacing}{\spaceskip=0pt\relax}
\providecommand{\BIBentryALTinterwordstretchfactor}{4}
\providecommand{\BIBentryALTinterwordspacing}{\spaceskip=\fontdimen2\font plus
\BIBentryALTinterwordstretchfactor\fontdimen3\font minus \fontdimen4\font\relax}
\providecommand{\BIBforeignlanguage}[2]{{%
\expandafter\ifx\csname l@#1\endcsname\relax
\typeout{** WARNING: IEEEtranN.bst: No hyphenation pattern has been}%
\typeout{** loaded for the language `#1'. Using the pattern for}%
\typeout{** the default language instead.}%
\else
\language=\csname l@#1\endcsname
\fi
#2}}
\providecommand{\BIBdecl}{\relax}
\BIBdecl

\bibitem[Green(1995)]{revjump}
P.~J. Green, ``Reversible jump markov chain monte carlo computation and bayesian model determination,'' \emph{Biometrika}, vol.~82, no.~4, pp. 711--732, 1995.

\bibitem[Sch{\"u}tte et~al.(1999)Sch{\"u}tte, Fischer, Huisinga, and Deuflhard]{mcprotein}
C.~Sch{\"u}tte, A.~Fischer, W.~Huisinga, and P.~Deuflhard, ``A direct approach to conformational dynamics based on hybrid monte carlo,'' \emph{Journal of Computational Physics}, vol. 151, no.~1, pp. 146--168, 1999.

\bibitem[Olsson(1995)]{mcphysics}
P.~Olsson, ``Two phase transitions in the fully frustrated xy model,'' \emph{Physical review letters}, vol.~75, no.~14, p. 2758, 1995.

\bibitem[Xie et~al.(2016)Xie, Lu, Zhu, and Wu]{Xie2016ATO}
J.~Xie, Y.~Lu, S.-C. Zhu, and Y.~Wu, ``A theory of generative convnet,'' in \emph{International Conference on Machine Learning}.\hskip 1em plus 0.5em minus 0.4em\relax PMLR, 2016, pp. 2635--2644.

\bibitem[Andrieu et~al.(2003)Andrieu, De~Freitas, Doucet, and Jordan]{andrieu2003introduction}
C.~Andrieu, N.~De~Freitas, A.~Doucet, and M.~I. Jordan, ``An introduction to mcmc for machine learning,'' \emph{Machine learning}, vol.~50, no.~1, pp. 5--43, 2003.

\bibitem[Liu and Wang(2016)]{liu2016stein}
Q.~Liu and D.~Wang, ``Stein variational gradient descent: A general purpose bayesian inference algorithm,'' \emph{Advances in neural information processing systems}, vol.~29, 2016.

\bibitem[Hastings(1970)]{mh}
W.~Hastings, ``Monte carlo sampling methods using markov chains and their applications,'' \emph{Biometrika}, vol.~57, no.~1, pp. 97--109, 1970.

\bibitem[Roberts and Rosenthal(1998)]{ld}
G.~O. Roberts and J.~S. Rosenthal, ``Optimal scaling of discrete approximations to langevin diffusions,'' \emph{Journal of the Royal Statistical Society: Series B (Statistical Methodology)}, vol.~60, no.~1, pp. 255--268, 1998.

\bibitem[Xifara et~al.(2014)Xifara, Sherlock, Livingstone, Byrne, and Girolami]{mala}
T.~Xifara, C.~Sherlock, S.~Livingstone, S.~Byrne, and M.~Girolami, ``Langevin diffusions and the metropolis-adjusted langevin algorithm,'' \emph{Statistics \& Probability Letters}, vol.~91, pp. 14--19, 2014.

\bibitem[Neal(2011)]{hmc}
R.~M. Neal, ``Mcmc using hamiltonian dynamics,'' in \emph{Handbook of Markov Chain Monte Carlo}.\hskip 1em plus 0.5em minus 0.4em\relax Chapman and Hall/CRC, 2011, pp. 139--188.

\bibitem[Huang et~al.(2024)Huang, Dong, Hao, Ma, and Zhang]{huang2024reverse}
X.~Huang, H.~Dong, Y.~Hao, Y.~Ma, and T.~Zhang, ``Reverse diffusion monte carlo,'' in \emph{International Conference on Learning Representations (ICLR)}, 2024.

\bibitem[Levy et~al.(2018)Levy, Hoffman, and Sohl-Dickstein]{Lvy2017GeneralizingHM}
D.~Levy, M.~D. Hoffman, and J.~Sohl-Dickstein, ``Generalizing hamiltonian monte carlo with neural networks,'' in \emph{International Conference on Learning Representations}, 2018.

\bibitem[Hu et~al.(2018)Hu, Chen, Sun, Bai, Ye, and Cheng]{nss}
T.~Hu, Z.~Chen, H.~Sun, J.~Bai, M.~Ye, and G.~Cheng, ``Stein neural sampler,'' \emph{arXiv preprint arXiv:1810.03545}, 2018.

\bibitem[Wu et~al.(2020)Wu, K{\"o}hler, and No{\'e}]{snf}
H.~Wu, J.~K{\"o}hler, and F.~No{\'e}, ``Stochastic normalizing flows,'' \emph{Advances in Neural Information Processing Systems}, vol.~33, pp. 5933--5944, 2020.

\bibitem[di~Langosco et~al.(2021)di~Langosco, Fortuin, and Strathmann]{Langosco2021NeuralVG}
L.~L. di~Langosco, V.~Fortuin, and H.~Strathmann, ``Neural variational gradient descent,'' \emph{arXiv preprint arXiv:2107.10731}, 2021.

\bibitem[Arbel et~al.(2021)Arbel, Matthews, and Doucet]{arbel2021annealed}
M.~Arbel, A.~Matthews, and A.~Doucet, ``Annealed flow transport monte carlo,'' in \emph{International Conference on Machine Learning}.\hskip 1em plus 0.5em minus 0.4em\relax PMLR, 2021, pp. 318--330.

\bibitem[Zhang and Chen(2021)]{Zhang2021PathIS}
Q.~Zhang and Y.~Chen, ``Path integral sampler: A stochastic control approach for sampling,'' in \emph{International Conference on Learning Representations}, 2021.

\bibitem[Matthews et~al.(2022)Matthews, Arbel, Rezende, and Doucet]{matthews2022continual}
A.~Matthews, M.~Arbel, D.~J. Rezende, and A.~Doucet, ``Continual repeated annealed flow transport monte carlo,'' in \emph{International Conference on Machine Learning}.\hskip 1em plus 0.5em minus 0.4em\relax PMLR, 2022, pp. 15\,196--15\,219.

\bibitem[Vargas et~al.(2022)Vargas, Grathwohl, and Doucet]{vargas2022denoising}
F.~Vargas, W.~S. Grathwohl, and A.~Doucet, ``Denoising diffusion samplers,'' in \emph{The Eleventh International Conference on Learning Representations}, 2022.

\bibitem[Lahlou et~al.(2023)Lahlou, Deleu, Lemos, Zhang, Volokhova, Hern{\'a}ndez-Garc{\i}a, Ezzine, Bengio, and Malkin]{lahlou2023theory}
S.~Lahlou, T.~Deleu, P.~Lemos, D.~Zhang, A.~Volokhova, A.~Hern{\'a}ndez-Garc{\i}a, L.~N. Ezzine, Y.~Bengio, and N.~Malkin, ``A theory of continuous generative flow networks,'' in \emph{International Conference on Machine Learning}.\hskip 1em plus 0.5em minus 0.4em\relax PMLR, 2023, pp. 18\,269--18\,300.

\bibitem[Brock et~al.(2018)Brock, Donahue, and Simonyan]{brock2018large}
A.~Brock, J.~Donahue, and K.~Simonyan, ``Large scale gan training for high fidelity natural image synthesis,'' in \emph{International Conference on Learning Representations}, 2018.

\bibitem[Karras et~al.(2019)Karras, Laine, and Aila]{karras2019style}
T.~Karras, S.~Laine, and T.~Aila, ``A style-based generator architecture for generative adversarial networks,'' in \emph{Proceedings of the IEEE/CVF conference on computer vision and pattern recognition}, 2019, pp. 4401--4410.

\bibitem[Karras et~al.(2020)Karras, Laine, Aittala, Hellsten, Lehtinen, and Aila]{karras2020analyzing}
T.~Karras, S.~Laine, M.~Aittala, J.~Hellsten, J.~Lehtinen, and T.~Aila, ``Analyzing and improving the image quality of stylegan,'' in \emph{Proceedings of the IEEE/CVF conference on computer vision and pattern recognition}, 2020, pp. 8110--8119.

\bibitem[Karras et~al.(2021)Karras, Aittala, Laine, H{\"a}rk{\"o}nen, Hellsten, Lehtinen, and Aila]{karras2021alias}
T.~Karras, M.~Aittala, S.~Laine, E.~H{\"a}rk{\"o}nen, J.~Hellsten, J.~Lehtinen, and T.~Aila, ``Alias-free generative adversarial networks,'' \emph{Advances in Neural Information Processing Systems}, vol.~34, pp. 852--863, 2021.

\bibitem[Nichol and Dhariwal(2021)]{nichol2021improved}
A.~Q. Nichol and P.~Dhariwal, ``Improved denoising diffusion probabilistic models,'' in \emph{International Conference on Machine Learning}.\hskip 1em plus 0.5em minus 0.4em\relax PMLR, 2021, pp. 8162--8171.

\bibitem[Dhariwal and Nichol(2021)]{dhariwal2021diffusion}
P.~Dhariwal and A.~Nichol, ``Diffusion models beat gans on image synthesis,'' \emph{Advances in neural information processing systems}, vol.~34, pp. 8780--8794, 2021.

\bibitem[Ramesh et~al.()Ramesh, Dhariwal, Nichol, Chu, and Chen]{ramesh2022hierarchical}
A.~Ramesh, P.~Dhariwal, A.~Nichol, C.~Chu, and M.~Chen, ``Hierarchical text-conditional image generation with clip latents.''

\bibitem[Saharia et~al.(2022)Saharia, Chan, Saxena, Li, Whang, Denton, Ghasemipour, Gontijo~Lopes, Karagol~Ayan, Salimans, et~al.]{saharia2022photorealistic}
C.~Saharia, W.~Chan, S.~Saxena, L.~Li, J.~Whang, E.~L. Denton, K.~Ghasemipour, R.~Gontijo~Lopes, B.~Karagol~Ayan, T.~Salimans \emph{et~al.}, ``Photorealistic text-to-image diffusion models with deep language understanding,'' \emph{Advances in Neural Information Processing Systems}, vol.~35, pp. 36\,479--36\,494, 2022.

\bibitem[Rombach et~al.(2022)Rombach, Blattmann, Lorenz, Esser, and Ommer]{rombach2022high}
R.~Rombach, A.~Blattmann, D.~Lorenz, P.~Esser, and B.~Ommer, ``High-resolution image synthesis with latent diffusion models,'' in \emph{Proceedings of the IEEE/CVF conference on computer vision and pattern recognition}, 2022, pp. 10\,684--10\,695.

\bibitem[Huang et~al.(2023)Huang, Huang, Yang, Ren, Liu, Li, Ye, Liu, Yin, and Zhao]{huang2023make}
R.~Huang, J.~Huang, D.~Yang, Y.~Ren, L.~Liu, M.~Li, Z.~Ye, J.~Liu, X.~Yin, and Z.~Zhao, ``Make-an-audio: Text-to-audio generation with prompt-enhanced diffusion models,'' \emph{arXiv preprint arXiv:2301.12661}, 2023.

\bibitem[Clark et~al.(2019)Clark, Donahue, and Simonyan]{clark2019adversarial}
A.~Clark, J.~Donahue, and K.~Simonyan, ``Adversarial video generation on complex datasets,'' \emph{arXiv preprint arXiv:1907.06571}, 2019.

\bibitem[Ho et~al.(2022{\natexlab{a}})Ho, Chan, Saharia, Whang, Gao, Gritsenko, Kingma, Poole, Norouzi, Fleet, et~al.]{ho2022imagen}
J.~Ho, W.~Chan, C.~Saharia, J.~Whang, R.~Gao, A.~Gritsenko, D.~P. Kingma, B.~Poole, M.~Norouzi, D.~J. Fleet \emph{et~al.}, ``Imagen video: High definition video generation with diffusion models,'' \emph{arXiv preprint arXiv:2210.02303}, 2022.

\bibitem[Molad et~al.(2023)Molad, Horwitz, Valevski, Acha, Matias, Pritch, Leviathan, and Hoshen]{molad2023dreamix}
E.~Molad, E.~Horwitz, D.~Valevski, A.~R. Acha, Y.~Matias, Y.~Pritch, Y.~Leviathan, and Y.~Hoshen, ``Dreamix: Video diffusion models are general video editors,'' \emph{arXiv preprint arXiv:2302.01329}, 2023.

\bibitem[Poole et~al.(2022)Poole, Jain, Barron, and Mildenhall]{poole2022dreamfusion}
B.~Poole, A.~Jain, J.~T. Barron, and B.~Mildenhall, ``Dreamfusion: Text-to-3d using 2d diffusion,'' in \emph{The Eleventh International Conference on Learning Representations}, 2022.

\bibitem[Nichol et~al.(2021)Nichol, Dhariwal, Ramesh, Shyam, Mishkin, McGrew, Sutskever, and Chen]{Nichol2021GLIDETP}
A.~Nichol, P.~Dhariwal, A.~Ramesh, P.~Shyam, P.~Mishkin, B.~McGrew, I.~Sutskever, and M.~Chen, ``Glide: Towards photorealistic image generation and editing with text-guided diffusion models,'' in \emph{International Conference on Machine Learning}, 2021.

\bibitem[Ho et~al.(2022{\natexlab{b}})Ho, Salimans, Gritsenko, Chan, Norouzi, and Fleet]{Ho2022VideoDM}
J.~Ho, T.~Salimans, A.~Gritsenko, W.~Chan, M.~Norouzi, and D.~J. Fleet, ``Video diffusion models,'' \emph{arXiv preprint arXiv:2204.03458}, 2022.

\bibitem[Rezende and Mohamed(2015)]{rezende2015variational}
D.~Rezende and S.~Mohamed, ``Variational inference with normalizing flows,'' in \emph{International conference on machine learning}.\hskip 1em plus 0.5em minus 0.4em\relax PMLR, 2015, pp. 1530--1538.

\bibitem[Song et~al.(2021)Song, Sohl-Dickstein, Kingma, Kumar, Ermon, and Poole]{song2020score}
Y.~Song, J.~Sohl-Dickstein, D.~P. Kingma, A.~Kumar, S.~Ermon, and B.~Poole, ``Score-based generative modeling through stochastic differential equations,'' in \emph{International Conference on Learning Representations}, 2021.

\bibitem[Neal(2001)]{ais01}
R.~M. Neal, ``Annealed importance sampling,'' \emph{Statistics and computing}, vol.~11, no.~2, pp. 125--139, 2001.

\bibitem[Luo et~al.(2024)Luo, Zhang, and Zhang]{luo2024entropy}
W.~Luo, B.~Zhang, and Z.~Zhang, ``Entropy-based training methods for scalable neural implicit samplers,'' \emph{Advances in Neural Information Processing Systems}, vol.~36, 2024.

\bibitem[Hyv{\"a}rinen and Dayan(2005)]{hyvarinen2005estimation}
A.~Hyv{\"a}rinen and P.~Dayan, ``Estimation of non-normalized statistical models by score matching.'' \emph{Journal of Machine Learning Research}, vol.~6, no.~4, 2005.

\bibitem[Song et~al.(2019)Song, Garg, Shi, and Ermon]{ssm}
Y.~Song, S.~Garg, J.~Shi, and S.~Ermon, ``Sliced score matching: {A} scalable approach to density and score estimation,'' in \emph{Proceedings of the Thirty-Fifth Conference on Uncertainty in Artificial Intelligence, {UAI} 2019, Tel Aviv, Israel, July 22-25, 2019}, 2019, p. 204.

\bibitem[Pang et~al.(2020)Pang, Xu, Li, Song, Ermon, and Zhu]{fdsm}
T.~Pang, K.~Xu, C.~Li, Y.~Song, S.~Ermon, and J.~Zhu, ``Efficient learning of generative models via finite-difference score matching,'' \emph{Advances in Neural Information Processing Systems}, vol.~33, pp. 19\,175--19\,188, 2020.

\bibitem[Meng et~al.(2020)Meng, Yu, Song, Song, and Ermon]{meng2020autoregressive}
C.~Meng, L.~Yu, Y.~Song, J.~Song, and S.~Ermon, ``Autoregressive score matching,'' \emph{Advances in Neural Information Processing Systems}, vol.~33, pp. 6673--6683, 2020.

\bibitem[Lu et~al.(2022)Lu, Zheng, Bao, Chen, Li, and Zhu]{lu2022maximum}
C.~Lu, K.~Zheng, F.~Bao, J.~Chen, C.~Li, and J.~Zhu, ``Maximum likelihood training for score-based diffusion odes by high order denoising score matching,'' in \emph{International Conference on Machine Learning}.\hskip 1em plus 0.5em minus 0.4em\relax PMLR, 2022, pp. 14\,429--14\,460.

\bibitem[Bao et~al.(2020)Bao, Li, Xu, Su, Zhu, and Zhang]{bao2020bi}
F.~Bao, C.~Li, K.~Xu, H.~Su, J.~Zhu, and B.~Zhang, ``Bi-level score matching for learning energy-based latent variable models,'' \emph{Advances in Neural Information Processing Systems}, vol.~33, pp. 18\,110--18\,122, 2020.

\bibitem[Vincent(2011)]{vincent2011connection}
P.~Vincent, ``A connection between score matching and denoising autoencoders,'' \emph{Neural computation}, vol.~23, no.~7, pp. 1661--1674, 2011.

\bibitem[Sharrock and Nemeth(2023)]{sharrock2023coin}
L.~Sharrock and C.~Nemeth, ``Coin sampling: Gradient-based bayesian inference without learning rates,'' \emph{arXiv preprint arXiv:2301.11294}, 2023.

\bibitem[Welling and Teh(2011)]{welling2011bayesian}
M.~Welling and Y.~W. Teh, ``Bayesian learning via stochastic gradient langevin dynamics,'' in \emph{Proceedings of the 28th international conference on machine learning (ICML-11)}, 2011, pp. 681--688.

\bibitem[Neal et~al.(2011)]{neal2011mcmc}
R.~M. Neal \emph{et~al.}, ``Mcmc using hamiltonian dynamics,'' \emph{Handbook of markov chain monte carlo}, vol.~2, no.~11, p.~2, 2011.

\bibitem[Dinh et~al.(2016)Dinh, Sohl-Dickstein, and Bengio]{Dinh2016DensityEU}
L.~Dinh, J.~N. Sohl-Dickstein, and S.~Bengio, ``Density estimation using real nvp,'' \emph{ArXiv}, vol. abs/1605.08803, 2016.

\bibitem[Wang and Liu(2016)]{wang2016learning}
D.~Wang and Q.~Liu, ``Learning to draw samples: With application to amortized mle for generative adversarial learning,'' \emph{arXiv preprint arXiv:1611.01722}, 2016.

\bibitem[Liu et~al.(2016)Liu, Lee, and Jordan]{Liu2016AKS}
Q.~Liu, J.~Lee, and M.~I. Jordan, ``A kernelized stein discrepancy for goodness-of-fit tests,'' in \emph{International Conference on Machine Learning}, 2016.

\bibitem[Gorham and Mackey(2015)]{gorham2015measuring}
J.~Gorham and L.~Mackey, ``Measuring sample quality with stein's method,'' \emph{Advances in Neural Information Processing Systems}, vol.~28, 2015.

\bibitem[Li et~al.(2019)Li, Chen, and Sommer]{Li2019LearningEM}
Z.~Li, Y.~Chen, and F.~T. Sommer, ``Learning energy-based models in high-dimensional spaces with multi-scale denoising score matching,'' \emph{arXiv preprint arXiv:1910.07762}, 2019.

\bibitem[LeCun et~al.(2006)LeCun, Chopra, Hadsell, Ranzato, and Huang]{LeCun2006ATO}
Y.~LeCun, S.~Chopra, R.~Hadsell, A.~Ranzato, and F.~J. Huang, ``A tutorial on energy-based learning,'' 2006.

\bibitem[Heusel et~al.(2017)Heusel, Ramsauer, Unterthiner, Nessler, and Hochreiter]{heusel2017gans}
M.~Heusel, H.~Ramsauer, T.~Unterthiner, B.~Nessler, and S.~Hochreiter, ``{GANs} trained by a two time-scale update rule converge to a local {Nash} equilibrium,'' in \emph{Advances in Neural Information Processing Systems}, 2017, pp. 6626--6637.

\bibitem[Zagoruyko and Komodakis(2016)]{zagoruyko2016wide}
S.~Zagoruyko and N.~Komodakis, ``Wide residual networks,'' \emph{arXiv preprint arXiv:1605.07146}, 2016.

\bibitem[Zhou et~al.(2024)Zhou, Zheng, Wang, Yin, and Huang]{zhou2024score}
\BIBentryALTinterwordspacing
M.~Zhou, H.~Zheng, Z.~Wang, M.~Yin, and H.~Huang, ``Score identity distillation: Exponentially fast distillation of pretrained diffusion models for one-step generation,'' in \emph{International Conference on Machine Learning}, 2024. [Online]. Available: \url{https://arxiv.org/abs/2404.04057}
\BIBentrySTDinterwordspacing

\bibitem[Blackard and Dean(1999)]{Blackard1999ComparativeAO}
J.~A. Blackard and D.~J. Dean, ``Comparative accuracies of artificial neural networks and discriminant analysis in predicting forest cover types from cartographic variables,'' \emph{Computers and Electronics in Agriculture}, vol.~24, pp. 131--151, 1999.

\bibitem[Titsias and L{\'a}zaro-Gredilla(2014)]{Titsias2014DoublySV}
M.~K. Titsias and M.~L{\'a}zaro-Gredilla, ``Doubly stochastic variational bayes for non-conjugate inference,'' in \emph{International Conference on Machine Learning}, 2014.

\bibitem[Hendrycks and Gimpel(2016)]{Hendrycks2016GaussianEL}
D.~Hendrycks and K.~Gimpel, ``Gaussian error linear units (gelus),'' \emph{arXiv: Learning}, 2016.

\bibitem[He et~al.(2015)He, Zhang, Ren, and Sun]{He2015DeepRL}
K.~He, X.~Zhang, S.~Ren, and J.~Sun, ``Deep residual learning for image recognition,'' \emph{2016 IEEE Conference on Computer Vision and Pattern Recognition (CVPR)}, pp. 770--778, 2015.

\end{thebibliography}
\bibliographystyle{IEEEtranN}

\onecolumn

\appendix

\section{Pseudo Code for DFT Algorithm for Training Samplers from EBM}
\begin{algorithm}[H]
\caption{Pytorch Style Pseudo Python Code for the DFT Training Method}
\DontPrintSemicolon
\SetAlgoLined
\SetKwInOut{Input}{input}\SetKwInOut{Output}{output}
\Input{Generator $G$, Discriminator $D$, Diffusion process $diffusion$, Energy function $energy\_fun$, Maximum iterations $max\_iter$, Batch size $batch\_size$, Optimizer arguments for $D$ and $G$, Annealing flag $anneal$, Validation interval $val\_interval$, Number of discriminator steps $D\_steps$, Distillation method $distill\_method$, Prefix $prefix$}
\Output{Trained generator $G$ and discriminator $D$}

\BlankLine
$device \gets 'cuda'$\;
$Doptim \gets Adam(D.parameters(), **D\_opt\_args)$\;
$Goptim \gets Adam(G.parameters(), **G\_opt\_args)$\;

\For{$iiter \in \{0, 1, \ldots, max\_iter\}$}{
    \If{$distill\_method == 'dft-full'$}{
        $D.train().requires\_grad\_(True)$\;
        $G.eval().requires\_grad\_(False)$\;
        \For{$\_ \in \{1, 2, \ldots, D\_steps\}$}{
            $z \gets \text{torch.randn}((batch\_size, G.latent\_dim, 1,1)).cuda()$\;
            $fake\_x \gets G(z)$\;
            $t \gets \text{random choice from } \{1, 2, \ldots, diffusion.T\}$\;
            $fake\_t, t, noise, sigma\_t, g2\_t \gets diffusion(fake\_x, t=t, return\_t=True)$\;
            $wgt \gets g2\_t/sigma\_t.square()$\;
            $d\_loss \gets 0.5 \times wgt \times (D(fake\_t, t) + noise).square().sum([1,2,3])$\;
            $d\_loss \gets d\_loss.mean()$\;
            $Doptim.zero\_grad()$\;
            $d\_loss.backward()$\;
            $Doptim.step()$\;
        }
        Append $d\_loss.item()$ to $dlosses$\;
        $D.eval().requires\_grad\_(False)$\;
        $G.train().requires\_grad\_(True)$\;
        $z \gets \text{torch.randn}((batch\_size, G.latent\_dim, 1,1)).to(device)$\;
        $fake\_x \gets G(z)$\;
        $t \gets \text{random choice from } \{7, 8, \ldots, 50\}$\;
        $fake\_t, t, noise, sigma\_t, g2\_t \gets diffusion(fake\_x, t=t, return\_t=True)$\;
        $sigma\_t \gets sigma\_t.view(-1,1,1,1)$\;
        $_fake\_t \gets fake\_t.clone().detach().requires\_grad\_(True)$\;
        $lam \gets 1.0$\;
        $score\_true \gets \text{torch.autograd.grad}(energy\_fun(_fake\_t,lam=lam).sum(), _fake\_t, create\_graph=True, retain\_graph=True)[0]/sigma\_t$\;
        $score\_fake \gets D(_fake\_t, t)/sigma\_t$\;
        $score\_diff \gets score\_fake - score\_true$\;
        $cond\_score \gets -noise/sigma\_t$\;
        $dloss\_dx \gets -2 \times (score\_diff \times (score\_fake - cond\_score)).sum([1,2,3])$\;
        $dloss\_dx \gets sigma\_t**2 \times \text{torch.autograd.grad}(dloss\_dx.sum(), _fake\_t, create\_graph=False, retain\_graph=False)[0]$\;
        $g\_loss \gets g2\_t \times (dloss\_dx \times fake\_t).sum([1,2,3])$\;
        $g\_loss \gets g\_loss.mean()$\;
        $Goptim.zero\_grad()$\;
        $g\_loss.backward()$\;
        $Goptim.step()$\;
        Append $g\_loss.item()$ to $glosses$\;
    }
    \If{$iiter \mod val\_interval == 0$}{
        \text{Save and plot intermediate results}\;
    }
}
\end{algorithm}

\section{Theory}

\subsection{Proof of \eqref{eqn:tFD_grad_full}}
\begin{proof}
\begin{align}\label{eqn:tFD_grad_full_app}
    \nonumber
    \frac{\partial}{\partial\theta} \mathcal{D}_{FD}(\theta) &= \frac{\partial}{\partial\theta}\mathbb{E}_{ \xx_\sigma\sim p_{\theta, \sigma}} \|\bm{s}_{q}(\xx_\sigma) - \bm{s}_{\theta, \sigma}(\xx_\sigma)\|_2^2 \\
    \nonumber
    & = \frac{\partial}{\partial\theta} \mathbb{E}_{\epsilon\sim \mathcal{N}(0,\mathbf{I}), \atop \zz\sim p_z} \|\bm{s}_{q}(g_\theta(\zz)+\sigma\epsilon) - \bm{s}_{\theta, \sigma}(g_\theta(\zz)+\sigma\epsilon)\|_2^2 \\
    & = \mathbb{E}_{\epsilon\sim \mathcal{N}(0,\mathbf{I}), \atop \zz\sim p_z}\bigg\{ \frac{\partial}{\partial } \big\{ \|\bm{s}_{q}(=g_\theta(\zz)+ \sigma\epsilon) - \bm{s}_{\theta, \sigma}(=g_\theta(\zz)+ \sigma\epsilon)\|_2^2 \big\}\frac{\partial g_\theta(\zz)+ \sigma\epsilon}{\partial\theta} \\
    & - 2 \big[ \bm{s}_{q}(\xx_\sigma) - \bm{s}_{\theta,\sigma}(\xx_\sigma) \big]^T \frac{\partial}{\partial\theta} \bm{s}_{\theta,\sigma}(\xx_\sigma)|_{\xx_\sigma=g_\theta(\zz)+ \sigma\epsilon} \bigg\}\\
    \nonumber
    & = \mathbb{E}_{\xx_\sigma\sim p_{\sigma,\theta}} \bigg\{  \frac{\partial}{\partial }\big\{ \|\bm{s}_{q}(\xx_\sigma) - \bm{s}_{\theta, \sigma}(\xx_\sigma)\|_2^2 \big\} \frac{\partial (\theta)}{\partial\theta} - 2 \big[ \bm{s}_{q}(\xx_\sigma) - \bm{s}_{\theta,\sigma}(\xx_\sigma) \big]^T \frac{\partial}{\partial\theta} \bm{s}_{\theta,\sigma}(\xx_\sigma) \bigg\}\\
    & = \operatorname{Grad}_{1}(\theta) + \operatorname{Grad}_{1}(\theta).
\end{align}
\end{proof}

\subsection{Proof of Theorem \ref{thm:score_derivative_identity}}

\begin{proof}
First, we prove a Lemma \ref{lemma:score_projectioin}. This lemma has been used for proving the equivalence of Denoising Score Matching \citep{vincent2011connection,zhou2024score} and Standard Score Matching \citep{hyvarinen2005estimation} as we introduced in Section \ref{sec:background}.

\begin{lemma}\label{lemma:score_projectioin}
    Let $\bm{u}(\cdot)$ be a vector-valued function, using the notations of Theorem \ref{thm:score_derivative_identity}. Let $q_\sigma(\xx_\sigma|\xx_0)=\mathcal{N}(\xx_\sigma; \xx_0, \sigma^2\mathcal{I})$, then under mild conditions, the identity holds:
    \begin{align}\label{eqn:lemma:score_projectioin}
        \mathbb{E}_{\xx_0\sim p_{0,\theta} \atop \xx_{\sigma}|\xx_0 \sim q_\sigma(\xx_{\sigma}|\xx_0)} \bm{u}(\xx_{\sigma},\theta)^T \bigg\{ \bm{s}_{\theta,\sigma}(\xx_{\sigma}) - \nabla_{\xx_{t}} \log q_{\sigma}(\xx_{\sigma}|\xx_0) \bigg\} = 0, ~~~ \forall \theta.
    \end{align}
\end{lemma}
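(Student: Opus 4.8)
The plan is to prove Lemma~\ref{lemma:score_projectioin} by the device Vincent used to match denoising and explicit score matching, adapted to carry an arbitrary test function $\bm{u}$. First I would split the left-hand side into the two expectations $A \coloneqq \mathbb{E}\,\bm{u}(\xx_\sigma,\theta)^T\bm{s}_{\theta,\sigma}(\xx_\sigma)$ and $B \coloneqq \mathbb{E}\,\bm{u}(\xx_\sigma,\theta)^T\nabla_{\xx_\sigma}\log q_\sigma(\xx_\sigma|\xx_0)$, both taken over $\xx_0\sim p_{0,\theta}$ and $\xx_\sigma|\xx_0\sim q_\sigma(\cdot|\xx_0)$, and show $A=B$, whence the left-hand side $A-B$ vanishes. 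Term $A$ is immediate: $\bm{s}_{\theta,\sigma}(\xx_\sigma)$ and $\bm{u}(\xx_\sigma,\theta)$ depend on $\xx_0$ only through $\xx_\sigma$, so marginalizing $\xx_0$ out of the joint density $p_{0,\theta}(\xx_0)q_\sigma(\xx_\sigma|\xx_0)$ leaves $p_{\theta,\sigma}(\xx_\sigma)=\int p_{0,\theta}(\xx_0)q_\sigma(\xx_\sigma|\xx_0)\,\diff\xx_0$, i.e. $A=\mathbb{E}_{\xx_\sigma\sim p_{\theta,\sigma}}\,\bm{u}(\xx_\sigma,\theta)^T\bm{s}_{\theta,\sigma}(\xx_\sigma)$.

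For term $B$ I would use the elementary identity $q_\sigma(\xx_\sigma|\xx_0)\,\nabla_{\xx_\sigma}\log q_\sigma(\xx_\sigma|\xx_0)=\nabla_{\xx_\sigma}q_\sigma(\xx_\sigma|\xx_0)$, so that $B=\int\!\!\int \bm{u}(\xx_\sigma,\theta)^T\,p_{0,\theta}(\xx_0)\,\nabla_{\xx_\sigma}q_\sigma(\xx_\sigma|\xx_0)\,\diff\xx_0\,\diff\xx_\sigma$. Interchanging the inner $\xx_0$-integral with $\nabla_{\xx_\sigma}$ turns the bracket into $\nabla_{\xx_\sigma}p_{\theta,\sigma}(\xx_\sigma)=p_{\theta,\sigma}(\xx_\sigma)\,\bm{s}_{\theta,\sigma}(\xx_\sigma)$, giving $B=\int \bm{u}(\xx_\sigma,\theta)^T p_{\theta,\sigma}(\xx_\sigma)\,\bm{s}_{\theta,\sigma}(\xx_\sigma)\,\diff\xx_\sigma=A$. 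Subtracting the two equal quantities yields $0$, which is exactly \eqref{eqn:lemma:score_projectioin}. (The $\nabla_{\xx_{t}}$ written in the statement should read $\nabla_{\xx_\sigma}$; $\xx_\sigma$ here plays the role of the generic noised variable.) I would also remark that this is precisely the identity that makes the intractable $\operatorname{Grad}_2$ term rewritable: applying it with $\bm{u}(\cdot,\theta) = \bm{s}_q(\cdot) - \bm{s}_{\operatorname{sg}[\theta],\sigma}(\cdot)$ and then differentiating in $\theta$ produces Theorem~\ref{thm:score_derivative_identity}.

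I expect the only genuine work to lie in justifying the two analytic steps the ``mild conditions'' are meant to license: (i) differentiating under the integral sign when moving $\nabla_{\xx_\sigma}$ past $\int(\cdot)\,p_{0,\theta}(\xx_0)\,\diff\xx_0$, which is legitimate because $\nabla_{\xx_\sigma}q_\sigma(\xx_\sigma|\xx_0) = -\sigma^{-2}(\xx_\sigma-\xx_0)\,q_\sigma(\xx_\sigma|\xx_0)$ is, locally uniformly in $\xx_\sigma$, dominated by a $p_{0,\theta}$-integrable function as soon as $p_{0,\theta}$ has a finite first moment; and (ii) the absolute integrability of the integrands defining $A$ and $B$, so that Tonelli/Fubini apply and both expectations are finite — guaranteed once, e.g., $\bm{u}(\cdot,\theta)$ has at most polynomial growth and $\mathbb{E}_{p_{\theta,\sigma}}\big[\|\bm{u}(\xx_\sigma,\theta)\|_2\,\|\bm{s}_{\theta,\sigma}(\xx_\sigma)\|_2\big]<\infty$. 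No integration by parts or boundary terms are needed in this formulation, which is what keeps the argument short; the only care required is to state a clean sufficient hypothesis on $p_{0,\theta}$ and $\bm{u}$ under which these interchanges are valid.
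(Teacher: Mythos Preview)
Your proposal is correct and follows essentially the same route as the paper: both reduce the claim to showing that $\mathbb{E}_{\xx_\sigma\sim p_{\theta,\sigma}}\bm{u}^T\bm{s}_{\theta,\sigma}(\xx_\sigma)$ equals $\mathbb{E}_{\xx_0,\xx_\sigma}\bm{u}^T\nabla_{\xx_\sigma}\log q_\sigma(\xx_\sigma|\xx_0)$, which is precisely Vincent's denoising score matching identity applied with the test vector $\bm{u}$. The only cosmetic difference is direction: the paper starts from the marginal side and substitutes the posterior-mean representation $\bm{s}_{\theta,\sigma}(\xx_\sigma)=\int\nabla_{\xx_\sigma}\log q_\sigma(\xx_\sigma|\xx_0)\,\frac{q_\sigma(\xx_\sigma|\xx_0)p_{0,\theta}(\xx_0)}{p_{\sigma,\theta}(\xx_\sigma)}\,\diff\xx_0$ directly, whereas you start from the conditional side and recover that representation by pulling $\nabla_{\xx_\sigma}$ through the $\xx_0$-integral; your version has the merit of making the ``mild conditions'' (dominated convergence for the gradient-integral swap, Fubini for the double integral) explicit, which the paper leaves implicit.
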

\begin{proof}[Proof of Lemma \ref{lemma:score_projectioin}]

Recall the definition of $p_{\sigma,\theta}$ and $\bm{s}_{\theta, \sigma}$:
\begin{align}
    p_{\sigma,\theta}(\xx_{\sigma}) &= \int q_{\sigma}(\xx_{\sigma} | \xx_0) p_{\theta, 0}(\xx_0) \diff \xx_0 \\
    \bm{s}_{\theta, \sigma}(\xx_{\sigma}) &= \int \nabla_{\xx_{\sigma}} \log q_{\sigma}(\xx_{\sigma} | \xx_0) \frac{q_{\sigma}(\xx_{\sigma} | \xx_0) p_{0,\theta}(\xx_0)}{p_{\sigma,\theta}(\xx_{\sigma})} \diff \xx_0.
\end{align}
We may use $\bm{u}$ for short of $\bm{u}(\xx_{\sigma},\theta)$. We have 
\begin{align}
    \mathbb{E}_{\xx_{\sigma} \sim p_{\sigma,\theta}} \bm{u}^T \bm{s}_{\theta, \sigma}(\xx_{\sigma}) &= \mathbb{E}_{\xx_{\sigma} \sim p_{\sigma,\theta}} \bm{u}^T \int \nabla_{\xx_{\sigma}} \log q_{\sigma}(\xx_{\sigma} | \xx_0) \frac{q_{\sigma}(\xx_{\sigma} | \xx_0) p_{0,\theta}(\xx_0)}{p_{\sigma,\theta}(\xx_{\sigma})} \diff \xx_0\\
    & = \int p_{\sigma,\theta}(\xx_{\sigma}) \bm{u}^T \int \nabla_{\xx_{\sigma}} \log q_{\sigma}(\xx_{\sigma} | \xx_0) \frac{q_{\sigma}(\xx_{\sigma} | \xx_0) p_{0,\theta}(\xx_0)}{p_{\sigma,\theta}(\xx_{\sigma})} \diff \xx_0 \diff \xx_{\sigma} \\
    & = \int \int \bm{u}^T \nabla_{\xx_{\sigma}} \log q_{\sigma}(\xx_{\sigma} | \xx_0) q_{\sigma}(\xx_{\sigma} | \xx_0) p_{0,\theta}(\xx_0) \diff \xx_0 \diff \xx_{\sigma} \\
    & = \mathbb{E}_{\xx_0\sim p_{0,\theta},\atop \xx_{\sigma}|\xx_0 \sim q_{\sigma}(\xx_{\sigma}|\xx_0)} \bm{u}^T \nabla_{\xx_{\sigma}} \log q_{\sigma}(\xx_{\sigma} | \xx_0) 
\end{align}
\end{proof}

Next, we turn to prove the Theorem \ref{thm:score_derivative_identity}. Taking $\theta$ gradient on both sides of identity \eqref{eqn:lemma:score_projectioin}, we have
\begin{align}\label{eqn:derive_flow_product}
    & \mathbb{E}_{\xx_{\sigma}\sim p_{\theta,\sigma}} \bigg\{ \frac{\partial}{\partial\theta}\bm{u}(\xx_{\sigma}, \theta)^T \bm{s}_{\sigma, \theta}(\xx_{\sigma}) + \bm{u}(\xx_{\sigma}, \theta)^T \frac{\partial}{\partial\theta} \bm{s}_{\sigma, \theta}(\xx_{\sigma})\bigg\} + \mathbb{E}_{\xx_{\sigma}\sim p_{\theta,\sigma}} \frac{\partial}{\partial\xx_{\sigma}} \bigg\{ \bm{u}(\xx_{\sigma}, \theta)^T \bm{s}_{\sigma, \theta}(\xx_{\sigma}) \bigg \}\frac{\partial\xx_{\sigma}}{\partial\theta} \\\nonumber
    &= \mathbb{E}_{\xx_0\sim p_{0,\theta}, \atop \xx_{\sigma}|\xx_0 \sim q_{\sigma}(\xx_{\sigma}|\xx_0)} \frac{\partial}{\partial\theta}\bm{u}(\xx_{\sigma}, \theta)^T \nabla_{\xx_\sigma}\log q_{\sigma}  (\xx_{\sigma}|\xx_0) + \mathbb{E}_{\xx_0\sim p_{0,\theta}, \atop \xx_{\sigma}|\xx_0 \sim q_{\sigma}(\xx_{\sigma}|\xx_0)} \bigg\{\frac{\partial}{\partial\xx_{\sigma}}\bigg[ \bm{u}(\xx_{\sigma}, \theta)^T \nabla_{\xx_\sigma}\log q_{\sigma}  (\xx_{\sigma}|\xx_0)\bigg] \frac{\partial\xx_{\sigma}}{\partial\theta} \\\nonumber
    & + \bm{u}(\xx_{\sigma}, \theta)^T\frac{\partial}{\partial\xx_0}\nabla_{\xx_\sigma}\log q_{\sigma}  (\xx_{\sigma}|\xx_0)\frac{\partial\xx_0}{\partial\theta}\bigg\}
\end{align}

Notice that one can have 
$$
\mathbb{E}_{\xx_{\sigma}\sim p_{\theta,\sigma}} \bigg\{ \frac{\partial}{\partial\theta}\bm{u}(\xx_{\sigma}, \theta)^T\bigg\} \bm{s}_{\sigma, \theta}(\xx_{\sigma}) = \mathbb{E}_{\xx_0\sim p_{0,\theta}, \atop \xx_{\sigma}|\xx_0 \sim q_{\sigma}(\xx_{\sigma}|\xx_0)} \frac{\partial}{\partial\theta}\bm{u}(\xx_{\sigma}, \theta)^T \nabla_{\xx_\sigma}\log q_{\sigma}  (\xx_{\sigma}|\xx_0)$$ 
by substituting $ \bm{u}(\xx_{\sigma}, \theta)$ with $\frac{\partial}{\partial\theta}\bm{u}(\xx_{\sigma}, \theta)$ in equation (\ref{eqn:lemma:score_projectioin}).  

This allows us to cancel out the corresponding terms from equation (\ref{eqn:derive_flow_product}), and we have 
\begin{align}
    &\mathbb{E}_{\xx_{\sigma}\sim p_{\theta,\sigma}} \bigg\{ \bm{u}(\xx_{\sigma}, \theta)^T \frac{\partial}{\partial\theta} \bm{s}_{\sigma, \theta}(\xx_{\sigma})\bigg\} + \mathbb{E}_{\xx_{\sigma}\sim p_{\theta,\sigma}} \frac{\partial}{\partial\xx_{\sigma}} \bigg\{ \bm{u}(\xx_{\sigma}, \theta)^T \bm{s}_{\sigma, \theta}(\xx_{\sigma}) \bigg \}\frac{\partial\xx_{\sigma}}{\partial\theta}\\\nonumber
    &= \mathbb{E}_{\xx_0\sim p_{0,\theta}, \atop \xx_{\sigma}|\xx_0 \sim q_{\sigma}(\xx_{\sigma}|\xx_0)} \bigg\{\frac{\partial}{\partial\xx_{\sigma}}\bigg[ \bm{u}(\xx_{\sigma}, \theta)^T \nabla_{\xx_\sigma}\log q_{\sigma}  (\xx_{\sigma}|\xx_0)\bigg] \frac{\partial\xx_{\sigma}}{\partial\theta} + \bm{u}(\xx_{\sigma}, \theta)^T\frac{\partial}{\partial\xx_0}\nabla_{\xx_\sigma}\log q_{\sigma}  (\xx_{\sigma}|\xx_0)\frac{\partial\xx_0}{\partial\theta} \bigg\}
\end{align}

This gives rise to 
\begin{align}\label{eqn:flow_product_result}
    &\mathbb{E}_{\xx_{\sigma}\sim p_{\theta,\sigma}} \bigg\{ \bm{u}(\xx_{\sigma}, \theta)^T \frac{\partial}{\partial\theta} \bm{s}_{\sigma, \theta}(\xx_{\sigma})\bigg\} \\
    &= \mathbb{E}_{\xx_0\sim p_{0,\theta}, \atop \xx_{\sigma}|\xx_0 \sim q_{\sigma}(\xx_{\sigma}|\xx_0)} \bigg\{\frac{\partial}{\partial\xx_{\sigma}}\bigg[ \bm{u}(\xx_{\sigma}, \theta)^T \big\{ \nabla_{\xx_\sigma}\log q_{\sigma}  (\xx_{\sigma}|\xx_0) -\bm{s}_{\theta,\sigma}(\xx_\sigma) \big\}\bigg] \frac{\partial\xx_{\sigma}}{\partial\theta} + \bm{u}(\xx_{\sigma}, \theta)^T\frac{\partial\nabla_{\xx_\sigma}\log q_{\sigma}  (\xx_{\sigma}|\xx_0)}{\partial\xx_0}\frac{\partial\xx_0}{\partial\theta} \bigg\}\nonumber
\end{align}

We now define the following loss function 
\begin{align}
    \mathcal{L}_{2}(\theta) = \mathbb{E}_{\xx_0\sim p_{0,\theta}, \atop \xx_{\sigma}|\xx_0 \sim q_{\sigma}(\xx_{\sigma}|\xx_0)} \bigg\{ \bm{u}(\xx_{\sigma},\operatorname{sg}[\theta])^T \big\{ \nabla_{\xx_\sigma}\log q_{\sigma}  (\xx_{\sigma}|\xx_0) -\bm{s}_{\operatorname{sg}[\theta],\sigma}(\xx_{\sigma}) \big\}\bigg\} 
\end{align}
with $\bm{u}(\xx_{\sigma}, \theta) = -2\big\{ \nabla_{\xx_\sigma}\log q_{\sigma} (\xx_{\sigma}) - \bm{s}_{\sigma, \theta}(\xx_{\sigma}) \big\}$. Its gradient becomes 
\begin{align}
\nonumber
& \mathbb{E}_{\xx_{\sigma} \sim p_{\theta,\sigma}} \bigg\{  - 2 \big\{ \nabla_{\xx_\sigma}\log q_{\sigma} (\xx_{\sigma}) - \bm{s}_{\sigma, \theta}(\xx_{\sigma}) \big\}^T \frac{\partial}{\partial\theta} \bm{s}_{\sigma, \theta}(\xx_{\sigma}) \bigg\} \\
& = \frac{\partial}{\partial\theta}\mathbb{E}_{\xx_0\sim p_{0,\theta}, \atop \xx_{\sigma}|\xx_0 \sim q_{\sigma}(\xx_{\sigma}|\xx_0)} \bigg\{ \bm{u}(\xx_{\sigma},\operatorname{sg}[\theta])^T \big\{ \nabla_{\xx_\sigma}\log q_{\sigma}  (\xx_{\sigma}|\xx_0) -\bm{s}_{\operatorname{sg}[\theta]}(\xx_{\sigma},t) \big\}\bigg\} 
\end{align}
by applying the above result in (\ref{eqn:flow_product_result}). 
\end{proof}

\section{Experiments}\label{app:exp}

\subsection{Experiment Details on 2D Synthetic Sampling}
\paragraph{Model architectures.}
In our 2D synthetic data experiments, a 4-layer Multi-Layer Perceptron (MLP) neural network, equipped with 200 neurons in each layer, serves as our sampler. This network employs a LeakyReLU activation function with a leakage parameter of 0.2. Concurrently, the score network is also an MLP with 4 layers and 200 neurons per layer, utilizing GELU activation.

\paragraph{Comparison of DFT and KL training methods.}
In this work, we propose the DFT training method for neural implicit samplers, which is built on minimizing the Fisher divergence between slightly perturbed neural sampler distribution and the un-normalized target distribution. The DFT has similarities to previous methods such as KL training \citep{luo2024entropy}, which also has a paradigm of alternative training an online score network, and neural implicit samplers. However, DFT is different from KL training. In 2D target distribution sampling and the Bayesian logistic regression experiment, we empirically find that DFT training shows better performances than KL training. The reason behind such a plausible performance of the DFT sampler might be the different divergence that DFT and KL samplers are using. Besides, the small perturbation to sampler distributions also facilitates the distribution density of the DFT sampler to spread in all space, which potentially stabilizes the training dynamics.

\subsection{Experiment Details on Bayesian Regression}
\paragraph{Experiment settings.}
Following the setup in \cite{nss}, we define the prior of the weights as $p(w \mid \alpha)=\mathcal{N}\left(w ; 0, \alpha^{-1}\right)$ and $p(\alpha)=\operatorname{Gamma}(\alpha ; 1,0.01)$. The Covertype dataset \cite{Blackard1999ComparativeAO}, comprising 581,012 samples with 54 features, is split into a training set (80\%) and a testing set (20\%). Our methods are benchmarked against Stein GAN, SVGD, SGLD, DSVI \cite{Titsias2014DoublySV}, KSD-NS, and Fisher-NS \cite{nss}. While SGLD, DSVI, and SVGD are trained for 3 epochs (approximately 15k iterations), neural network-based methods (Fisher-NS, KSD-NS, Stein GAN, KL-NS, and DFT-NS) are trained until convergence. Given the equivalence of Fisher-NS \cite{nss}, we train the implicit sampler using the denoising Fisher training method. The learning rate for both the score network and the sampler is set to $0.0002$. The target distribution is approximated using 500 random data samples, with the score estimation phase set to 2. The logistic regression model's test accuracy is evaluated using 100 samples from the trained sampler.

For Bayesian inference, we adopt the same configuration as in \cite{nss}. The neural samplers are constructed with a 4-layer MLP containing 1024 hidden units per layer and GELU activations. The sampler's output dimension is 55, with an input dimension of 550, aligning with the setup in \citet{nss}. The score network mirrors the sampler's architecture but with an input dimension of 55. Adam optimizers with a learning rate of 0.0002 and default beta values are used for both networks. A batch size of 100 is employed for training the sampler, with the score network updated twice for every sampler update. Standard score matching is used to train the score network. The sampler undergoes 10k iterations per repetition, with 30 independent repetitions to determine the mean and standard deviation of the test accuracy. For SGLD, the learning rate follows $0.1/(t+1)^{0.55}$ as recommended in \cite{welling2011bayesian}, with the last 100 points averaged for evaluation. DSVI uses a learning rate of $1e-07$ with 100 iterations per stage. SVGD employs an RBF kernel with bandwidth determined by the median trick \cite{liu2016stein}, utilizing 100 particles with a step size of 0.05 for evaluation.

\paragraph{Model architectures.}
Our model employs 4-layer MLP neural networks for both the sampler and the score network. The GELU \cite{Hendrycks2016GaussianEL} activation function is utilized for both, with a hidden dimension of 1024. The sampler's input dimension is set to 128.

\subsection{Experiment Details on Sampling from EBMs}
\paragraph{Datasets and model architectures.} 
We pre-train a deep (multi-scale) Energy-Based Model (EBM) \cite{Li2019LearningEM}, denoted as $E_d(.)$, with 12 residual layers \cite{He2015DeepRL}, following the approach in DeepEBM \citep{Li2019LearningEM}. The noise levels are initialized at $\sigma_{min}=0.3$ and capped at $\sigma_{max}=3.0$. The energy for varying noise levels is modeled as $E_\sigma(\xx) = f(\xx)/\sigma$. With a learning rate of 0.001, the EBM is pre-trained for 200k iterations. Samples are extracted from the deep EBM using an annealed Langevin dynamics algorithm similar to that in \cite{Li2019LearningEM}. Our implicit sampler is a neural network comprising four inverse convolutional layers with hidden dimensions of 1024, 512, 256, and 3. Each layer incorporates 2D BatchNormalization and a LeakyReLU activation with a leak parameter of 0.2. The implicit generator's prior is a standard Multivariate Gaussian distribution. The score network adheres to a UNet architecture adapted from the repository\footnote{\url{https://github.com/huggingface/notebooks/blob/main/examples/annotated_diffusion.ipynb}}. To align with the multi-scale EBM design, the score function is parameterized as $S_\sigma(\xx) \coloneqq S(\xx)/\sigma$.

\paragraph{Training details.} 
The deep EBM is pre-trained on the MNIST dataset. The pre-trained EBM is then treated as a multi-scale un-normalized target distribution. We employ the DFT training method to refine the score network and generator, aiming to align the generator with the target distribution. Both the generator and score network are initialized randomly. During each training iteration, a noise level $\sigma$ is randomly selected within the range $[\sigma_{min}, \sigma_{max}]$. A sample is generated and Gaussian noise with variance $\sigma^2$ is added. The score network is then updated using denoising score matching on the generated samples. Subsequently, a batch of samples with the same variance of Gaussian noise is generated, and the generator's parameters are updated using the gradient estimation from Algorithm \ref{alg:denoise_training}. The Adam optimizer with a learning rate of 0.0001 is used for both networks. For the score network, the optimizer parameters are set to $\beta_0 = 0.9$ and $\beta_1 = 0.99$. For the generator, these are set to $\beta_0 = 0$ and $\beta_1 = 0.99$.

\paragraph{Evaluation metrics.} 
The Frechet Inception Distance (FID) \cite{heusel2017gans} is adapted to qualitatively assess the generated samples' quality. An MNIST image classifier with a WideResNet \cite{zagoruyko2016wide} architecture, having a depth of 16 and a widening factor of 8, is pre-trained. The Wasserstein distance in the feature space of this classifier is calculated using the FID method.

\end{document}


%

%

\onecolumn
\aistatstitle{Instructions for Paper Submissions to AISTATS 2025: \\
Supplementary Materials}

\section{FORMATTING INSTRUCTIONS}

To prepare a supplementary pdf file, we ask the authors to use \texttt{aistats2025.sty} as a style file and to follow the same formatting instructions as in the main paper.
The only difference is that the supplementary material must be in a \emph{single-column} format.
You can use \texttt{supplement.tex} in our starter pack as a starting point, or append the supplementary content to the main paper and split the final PDF into two separate files.

Note that reviewers are under no obligation to examine your supplementary material.

\section{MISSING PROOFS}

The supplementary materials may contain detailed proofs of the results that are missing in the main paper.

\subsection{Proof of Lemma 3}

\textit{In this section, we present the detailed proof of Lemma 3 and then [ ... ]}

\section{ADDITIONAL EXPERIMENTS}

If you have additional experimental results, you may include them in the supplementary materials.

\subsection{The Effect of Regularization Parameter}

\textit{Our algorithm depends on the regularization parameter $\lambda$. Figure 1 below illustrates the effect of this parameter on the performance of our algorithm. As we can see, [ ... ]}

\vfill